\definecolor{mydarkblue}{rgb}{0,0.08,0.45}
\newcommand{\x}{w}
\newcommand{\xk}{w_{k}}
\newcommand{\xkk}{w_{k+1}}
\newcommand{\xopt}{w^{*}}
\newcommand{\grad}[1]{\nabla f(#1)}
\newcommand{\norm}[1]{\left\|#1\right\|}
\newcommand{\normsq}[1]{\left\|#1\right\|^{2}}
\newcommand{\E}{\mathbb{E}}
\newcommand{\etak}{\eta_{k}}
\newcommand{\mutilde}{\tilde{\mu}}
\newcommand{\Ltilde}{\tilde{L}}
\newcommand{\mubar}{\bar{\mu}}
\newcommand{\indnorm}[2]{\left\|#1\right\|_{#2}}
\newcommand{\Hk}{\mathbf{B}_k}
\newcommand{\transpose}{^\mathsf{\scriptscriptstyle T}}
\newtheorem{theorem}{Theorem}
\newtheorem{lemma}{Lemma}
\newtheorem{proposition}{Proposition}
\newtheorem{corollary}{Corollary}
\newcommand{\R}{\mathbb{R}}
\newcommand{\inner}[2]{\left\langle#1\,,\,#2\right\rangle}
\newcommand{\dom}{\text{dom}}
\newcommand{\defeq}{\vcentcolon=}
\newcommand*{\ie}{i.e.\@\xspace}
\newcommand{\gradf}[1]{\nabla f(#1)}
\newcommand{\sgradf}[2]{\nabla f_{#1}(#2)}
\newcommand{\hessf}[1]{\nabla^2 f(#1)}
\newcommand{\shessf}[2]{\mathbf{H}_{#1}(#2)}
\newcommand{\tensf}[1]{f^{\prime \prime \prime}(#1)}
\newcommand{\batchG}{\mathcal{G}}
\newcommand{\batchS}{\mathcal{S}}
\newcommand{\Gk}{\mathcal{G}_k}
\newcommand{\Sk}{\mathcal{S}_k}
\newcommand{\expectationof}[1]{\mathbb{E}\left[#1\right]}
\newcommand{\expectationwrtof}[2]{\mathbb{E}_{#1}\left[#2\right]}
\newcommand*{\rom}[1]{\expandafter\@slowromancap\romannumeral #1@}
\newcommand{\rnum}[1]{\lowercase\expandafter{\romannumeral #1\relax}}
\newcommand{\rhond}{\rho_{\text{nd}}}
\newcommand{\nd}[1]{\lambda(#1)}
\newcommand{\omegaof}[1]{\omega\left(#1\right)}
\newcommand{\omegastarof}[1]{\omega_*\left(#1\right)}
\newcommand{\omegaprimeof}[1]{\omega'\left(#1\right)}
\begin{document}
\runningtitle{Stochastic Second-Order Methods under Interpolation}
\runningauthor{Meng, Vaswani, Laradji, Schmidt, Lacoste-Julien}
\twocolumn[
\aistatstitle{Fast and Furious Convergence: \\ Stochastic Second-Order Methods under Interpolation}

\aistatsauthor{Si Yi Meng$^{*1}$ \And Sharan Vaswani$^{*2}$  \And Issam Laradji$^{1,3}$  \And Mark Schmidt$^{1,4}$ \And  Simon Lacoste-Julien$^{2}$ }

\aistatsaddress{ ${^1}$University of British Columbia\\ ${^2}$Mila, Universit\'e de Montr\'eal  \And ${^3}$Element AI \\ ${^4}$CCAI Affiliate Chair (Amii) } 
]
\begin{abstract}
We consider stochastic second-order methods for minimizing smooth and strongly-convex functions under an interpolation condition satisfied by over-parameterized models. Under this condition, we show that the regularized subsampled Newton method (R-SSN) achieves global linear convergence with an adaptive step-size and a constant batch-size. By growing the batch size for both the subsampled gradient and Hessian, we show that R-SSN can converge at a quadratic rate in a local neighbourhood of the solution. We also show that R-SSN attains local linear convergence for the family of self-concordant functions. Furthermore, we analyze stochastic BFGS algorithms in the interpolation setting and prove their global linear convergence. We empirically evaluate stochastic L-BFGS and a ``Hessian-free'' implementation of R-SSN for binary classification on synthetic, linearly-separable datasets and real datasets under a kernel mapping. Our experimental results demonstrate the fast convergence of these methods, both in terms of the number of iterations and wall-clock time.
\end{abstract}

\section{Introduction}
\label{sec:introduction}
Many machine learning tasks can be formulated as minimizing a finite sum of objective functions. For instance, in supervised learning, each component of the finite sum is the loss function associated with a training example. The overall objective is typically minimized using stochastic first-order methods such as stochastic gradient descent (SGD) or its variance-reduced versions~\citep{schmidt2017minimizing,johnson2013accelerating,defazio2014saga}. However, first-order methods can suffer from slow convergence on ill-conditioned problems. Popular adaptive methods~\citep{duchi2011adaptive, kingma2014adam, tieleman2012lecture} alleviate this problem to some extent by using the covariance of stochastic gradients to approximate second order information, although there are no worst-case guarantees on the quality of this approximation. Empirically, these adaptive methods are more robust to the problem's conditioning and result in decent performance across different tasks. 

There has been substantial work in explicitly incorporating second-order information through the Hessian matrix in Newton methods~\citep{liu1989limited, nocedal2006numerical, martens2010deep} or through the Fisher information matrix in natural gradient methods~\citep{amari1998natural, roux2008topmoumoute,pascanu2013revisiting, martens2015optimizing}. Incorporating this information to adapt to the local curvature enables Newton methods to achieve quadratic convergence in a neighbourhood of the solution, whereas typical first-order methods can only achieve linear convergence~\citep{nesterov2018lectures}. However, the computational complexity of the Newton method is linear in the number of training examples and cubic in the problem dimension, making it impractical for large high-dimensional datasets. 

In order to reduce the dependence on the number of training examples, \emph{subsampled Newton} methods~\citep{roosta2016suba, roosta2016subb, bollapragada2018exact, erdogdu2015convergence, xu2016sub} take a small batch of examples to form the subsampled Hessian in each iteration. In order to further scale these methods to high-dimensional datasets or models with a large number of parameters, there has been a rich body of research in developing computationally efficient approximations to the Hessian matrix~\citep{becker1988improving, liu1989limited, martens2015optimizing, erdogdu2015convergence, pilanci2017newton, patel2016kalman}. In this paper, we mainly restrict our attention to subsampled Newton methods and consider problems of a moderate dimension where it is computationally efficient to perform subsampled Hessian-vector operations~\citep{werbos1988backpropagation, pearlmutter1994fast}. 

Recently, a parallel line of work has focused on optimizing large over-parameterized models prevalent in modern machine learning. The high expressivity of these models enables them to \emph{interpolate} the training data, meaning that the gradient for each component function in the finite sum can become zero at the solution of the overall objective~\citep{schmidt2013fast,vaswani2018fast, vaswani2019painless, ma2018power, bassily2018exponential}. Examples of such models include logistic regression on linearly-separable data, non-parametric regression~\citep{liang2018just, belkin2018does}, boosting ~\citep{schapire1998boosting}, and over-parameterized deep neural networks~\citep{zhang2016understanding}. Interpolation allows stochastic first-order methods to attain the convergence rates of their deterministic counterparts, without the need for explicit variance reduction~\citep{vaswani2018fast, vaswani2019painless, ma2018power, bassily2018exponential}. However, first order methods do not take advantage of second order information and therefore, can still be hindered by ill-conditioning. \citet{bertsekas1997nonlinear} shows that the Gauss-Newton method is equivalent to Newton's method for solving nonlinear least-squares when interpolation is satisfied; hence the convergence is also superlinear. These reasons motivate us to study stochastic second-order methods in the interpolation setting.

\subsection{Contributions}
\label{sec:contributions}
We focus on the regularized subsampled Newton method (R-SSN) that uses a batch of examples to form a subsampled Hessian and gradient vector in each iteration. R-SSN combines the subsampled Hessian with the Levenberg-Marquart (LM) regularization~\citep{levenberg1944method,marquardt1963algorithm} that ensures a unique update direction. We first analyze the convergence rate of R-SSN for strongly-convex functions in the interpolation setting. In~\cref{sec:ssn-global}, we show that R-SSN with an adaptive step-size and a \emph{constant} batch-size can achieve global linear convergence in expectation. This is in contrast with the work of \citet{bollapragada2018exact} and \citet{bellavia2018subsampled} that analyze subsampled Newton methods without interpolation, and require a \emph{geometrically increasing} batch-size to achieve global linear convergence.

If we allow for a growing batch-size in the interpolation setting, R-SSN results in linear-quadratic convergence in a local neighbourhood of the optimal solution (\cref{sec:ssn-local}). In contrast, in order to obtain superlinear convergence, \citet{bollapragada2018exact} require the batch size for the subsampled gradient to grow at a faster-than-geometric rate. Our results thus show that interpolation allows R-SSN to achieve fast convergence with a more reasonable growth of the batch size. 

We further analyze the performance of R-SSN for minimizing self-concordant functions. In~\cref{sec:theory-self-concordance}, we analyze the convergence of R-SSN for self-concordant functions under the interpolation setting, and show that R-SSN with an adaptive step-size and constant batch-size results in local linear convergence in expectation. Closest to our work is the recent paper by \citet{marteau2019globally} that shows approximate Newton methods result in local linear convergence independent of the condition number; however, they do not consider the interpolation setting and requires that the approximate Newton directions are ``close'' to the true direction with high probability. 

Next, we study stochastic Quasi-Newton methods~\citep{schraudolph2007stochastic, sunehag2009variable,zhou2017stochastic, liu2018acceleration, berahas2019quasi, gao2019quasi} in the interpolation setting (\cref{sec:preconditioned-sgd}). We prove that these algorithms, including the popular L-BFGS~\citep{liu1989limited} (the limited-memory version of BFGS), can attain global linear convergence with a constant batch-size. Our result is in contrast to previous work that shows the global linear convergence of stochastic BFGS algorithms by either using variance-reduction~\citep{kolte2015accelerating, lucchi2015variance, moritz2016linearly} or progressive batching strategies~\citep{bollapragada2018progressive}. 

Finally, in~\cref{sec:experiments}, we evaluate R-SSN and stochastic L-BFGS for binary classification. We use synthetic linearly-separable datasets and consider real datasets under a kernel mapping. We use automatic differentiation and truncated conjugate gradient~\citep{hestenes1952methods} to develop a ``Hessian-free'' implementation of R-SSN that allows computing the Newton update without additional memory overhead.\footnote{With code: \url{https://github.com/IssamLaradji/ssn}.} When interpolation holds, we observe that both R-SSN and stochastic L-BFGS result in faster convergence when compared to popular first-order methods~\citep{johnson2013accelerating, kingma2014adam, duchi2011adaptive}. Furthermore, a modest batch-growth strategy and stochastic line-search~\citep{vaswani2019painless} scheme ensure that R-SSN is computationally efficient and competitive with stochastic first-order methods and L-BFGS variants in terms of both the number of iterations and wall-clock time required for convergence. 
\section{Background}
\label{sec:background}
We consider unconstrained minimization of a finite sum, $\min_{\x\in\R^d} f(\x) \defeq \frac{1}{n}\sum_{i=1}^n f_i(\x)$, where $f$ and each $f_i$ are twice continuously differentiable. The regularized subsampled Newton method (R-SSN) takes a step in the \emph{subsampled Newton direction} at iteration $k$: 
\begin{equation}
\xkk = \xk - \etak \left[\shessf{\batchS_k}{\xk}\right]^{-1}\sgradf{\batchG_k}{\xk},
\label{eq:inexactnewtonupdate}
\end{equation}
where $\etak$ is the step size. The sets $\Gk$ ($b_{g_k} = |\batchG_k|$) and $\Sk$ ($b_{s_k} = |\batchS_k|$) are independent samples of indices uniformly chosen from $\{1,\dots,n\}$ without replacement. The subsampled gradient and the regularized subsampled Hessian are defined as
\begin{align}
\sgradf{\batchG_k}{\xk}  &= \frac{1}{b_{g_k}}\sum_{i \in \batchG_k} \sgradf{i}{\xk} ,\\
\label{eq:subsampled-hessian-with-tau}
\shessf{\batchS_k}{\xk}  &= \frac{1}{b_{s_k}} \sum_{i \in \batchS_k} \nabla^2f_i(\xk) + \tau I_d
\end{align}
where $\tau\geq0$ is the LM regularization~\citep{levenberg1944method,marquardt1963algorithm}. Both the subsampled gradient and Hessian are unbiased, implying that $\E_{\batchG_k}[\sgradf{\batchG_k}{\xk}] = \gradf{\xk}$ and $\E_{\batchS_k}[\shessf{\batchS_k}{\xk}] = \hessf{\xk} + \tau I_d$. 

Throughout this paper, $\norm{\cdot}$ denotes the $\ell_2$ norm of a vector or the spectral norm of a matrix. For all our convergence results, we make the standard assumptions
that $f$ is $\mu$-strongly convex and $L$-smooth. We denote $\kappa=L/\mu$ as the condition number of $f$, and $\xopt$ as the unique minimizer of $f$. Additionally, we assume that each component $f_i$ is $L_i$-smooth and $\mu_i$-strongly-convex. Although $\mu_i$ can be zero for some of the components, meaning they are only convex, the finite sum is strongly-convex with $\mu > 0$. Furthermore, we define $\bar{\mu} = \sum_{i = 1}^{n} \mu_i/n\leq \mu$ and $\bar{L} = \sum_{i = 1}^{n} L_i/n \geq L$ to be the average strong-convexity and smoothness constants of $f$. 

These assumptions imply that for any subsample $S$, the function $\sum_{i \in S} f_i / |S|$ is $L_{\batchS}$-smooth and $\mu_{\batchS}$-strongly-convex, hence the eigenvalues of the corresponding subsampled Hessian can be upper and lower-bounded~\citep{bollapragada2018exact, bollapragada2018progressive, moritz2016linearly}. In particular, if $\mutilde = \min_S {\mu_S} \geq 0$ and $\Ltilde = \max_{S} L_S$, then for any sample $\batchS$ and point $\x$, the \emph{regularized} subsampled Hessian $\shessf{\batchS}{\x}$ has eigenvalues bounded in the $[\mutilde + \tau, \Ltilde + \tau]$ range. The LM regularization thus ensures that $\shessf{\batchS}{\x}$ will always be positive definite, the subsampled Newton direction exists and is unique. 

In this work, we focus on models capable of \emph{interpolating} the training data. Typically, these models are highly expressive and are able to fit all the training data. In the finite sum setting, interpolation implies that if $\gradf{\xopt}=0$, then $\sgradf{i}{\x^*}=0$ for all $i$ ~\citep{vaswani2019painless, vaswani2018fast, bassily2018exponential, ma2018power}, meaning that all the individual functions $f_i$ are minimized at the optimal solution $\x^*$. For the smooth, strongly-convex finite sums we consider, interpolation implies the \emph{strong growth condition} (SGC)~\citep[Propositions 1, 2]{vaswani2018fast}, ~\citep{schmidt2013fast}: for some constant $\rho \geq 1$ and all $\x$,
\begin{equation*}
\E_i\normsq{\sgradf{i}{\x}} \leq \rho\normsq{\gradf{\x}} \tag{SGC}.
\end{equation*}
For example, if the training data spans the feature space, then the SGC is satisfied for models interpolating the data when using the squared loss (for regression) and the squared hinge loss (for classification). 
\section{Subsampled Newton Methods}
\label{sec:theory}
In this section, we present our main theoretical results for R-SSN. We characterize its global linear convergence (Section~\ref{sec:ssn-global}) and local quadratic convergence (Section~\ref{sec:ssn-local}) under interpolation. We use the notions of Q and R-convergence rates~\citep{ortega1970iterative,nocedal2006numerical} reviewed in Appendix~\ref{app:convergence-rates}. 

\subsection{Global convergence}
\label{sec:ssn-global}
In the following theorem, we show that for smooth and strongly-convex functions satisfying interpolation, R-SSN with an adaptive step-size and constant batch-sizes for both the subsampled gradient and Hessian results in linear convergence from an arbitrary initialization.
\begin{theorem}[Global linear convergence]
\label{thm:globalinexactnewton}
Under (a) $\mu$-strong-convexity, (b) $L$-smoothness of $f$, (c) $[\mutilde + \tau, \Ltilde + \tau]$-bounded eigenvalues of the subsampled Hessian and (d) $\rho$-SGC, the sequence $\{\xk\}_{k\geq0}$ generated by R-SSN with (\rnum{1}) step-size $\etak = \frac{\left(\mu_{\Sk}+\tau\right)^2}{L\left(\left(\mu_{\Sk}+\tau\right)+\left(L_{\Sk}+\tau\right)c_g\right)}$ and (\rnum{2}) constant batch-sizes $b_{s_k} = b_s$, $b_{g_k} = b_g$ converges to $\xopt$ at a Q-linear rate from an arbitrary initialization $\x_0$,
\begin{equation*}
\E[f(\x_T)] - f(\xopt) \leq \left(1-\alpha\right)^T (f(\x_0)-f(\xopt)) , \text{ where}
\end{equation*}
$\alpha=\min\Big\{\frac{(\bar{\mu}+\tau)^2}{2\kappa c_g(\Ltilde+\tau)}\,,\,\frac{(\bar{\mu}+\tau)}{2\kappa(\Ltilde+\tau)}\Big\}$ and  $c_g =\frac{(\rho-1) \, (n - b_g)}{(n-1) \, b_g}$.
\end{theorem}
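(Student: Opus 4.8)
The plan is to establish a one-step expected-descent inequality conditioned on the Hessian sample and then iterate it. Since $f$ is $L$-smooth, I would start from the quadratic upper bound $f(\xkk) \le f(\xk) + \inner{\grad{\xk}}{\xkk-\xk} + \frac{L}{2}\normsq{\xkk-\xk}$ and substitute the R-SSN update $\xkk-\xk = -\etak\,[\shessf{\Sk}{\xk}]^{-1}\sgradf{\Gk}{\xk}$. Because $\Gk$ and $\Sk$ are drawn independently, I would condition on $\Sk$ first (this fixes both the preconditioner $\shessf{\Sk}{\xk}$ and the step-size $\etak$) and average over $\Gk$ using unbiasedness $\condE{\sgradf{\Gk}{\xk}}{\Sk} = \grad{\xk}$. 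Writing $v = [\shessf{\Sk}{\xk}]^{-1}\grad{\xk}$ and splitting $\sgradf{\Gk}{\xk}$ into its mean plus a zero-mean fluctuation, the cross term vanishes and I obtain $\condE{f(\xkk)}{\Sk} \le f(\xk) - \etak\, v\transpose\shessf{\Sk}{\xk}\,v + \frac{L\etak^2}{2}\normsq{v} + \frac{L\etak^2}{2}\,T_{\mathrm{var}}$, where $T_{\mathrm{var}} = \condE{\normsq{[\shessf{\Sk}{\xk}]^{-1}(\sgradf{\Gk}{\xk}-\grad{\xk})}}{\Sk}$.

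To control $T_{\mathrm{var}}$ I would use assumption (c): the eigenvalues of $[\shessf{\Sk}{\xk}]^{-1}$ are at most $(\muk[\Sk]+\tau)^{-1}$, so $T_{\mathrm{var}} \le (\mu_{\Sk}+\tau)^{-2}\,\condE{\normsq{\sgradf{\Gk}{\xk}-\grad{\xk}}}{\Sk}$. The remaining gradient variance I would evaluate with the standard sampling-without-replacement formula, giving $\frac{n-b_g}{(n-1)b_g}\cdot\frac1n\sum_{i}\normsq{\sgradf{i}{\xk}-\grad{\xk}}$. Expanding the population variance as $\frac1n\sum_i\normsq{\sgradf{i}{\xk}} - \normsq{\grad{\xk}}$ and invoking $\rho$-SGC gives $\frac1n\sum_i\normsq{\sgradf{i}{\xk}-\grad{\xk}} \le (\rho-1)\normsq{\grad{\xk}}$, hence $T_{\mathrm{var}} \le c_g\,(\mu_{\Sk}+\tau)^{-2}\normsq{\grad{\xk}}$, which is exactly where $c_g = \frac{(\rho-1)(n-b_g)}{(n-1)b_g}$ enters.

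For the deterministic signal terms the crucial step is \emph{not} to bound $\normsq{v}$ directly, but to use $\shessf{\Sk}{\xk}\succeq(\mu_{\Sk}+\tau)I$ to write $\normsq{v} \le (\mu_{\Sk}+\tau)^{-1}\, v\transpose\shessf{\Sk}{\xk}v$, collapsing the first two terms into $-\big(\etak - \tfrac{L\etak^2}{2(\mu_{\Sk}+\tau)}\big)\,v\transpose\shessf{\Sk}{\xk}v$, and then apply $v\transpose\shessf{\Sk}{\xk}v = \grad{\xk}\transpose[\shessf{\Sk}{\xk}]^{-1}\grad{\xk} \ge (L_{\Sk}+\tau)^{-1}\normsq{\grad{\xk}}$. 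Combining (rather than separately bounding) the linear and quadratic signal terms is what produces the factor $(\mu_{\Sk}+\tau)+(L_{\Sk}+\tau)c_g$ in the denominator. The result is $\condE{f(\xkk)}{\Sk}-f(\xk) \le \big(-A\etak + B\etak^2\big)\normsq{\grad{\xk}}$ with $A = (L_{\Sk}+\tau)^{-1}$ and $B = \frac{L\left((\mu_{\Sk}+\tau)+(L_{\Sk}+\tau)c_g\right)}{2(\mu_{\Sk}+\tau)^2(L_{\Sk}+\tau)}$; the minimizer $A/(2B)$ is precisely the prescribed step-size (\rnum{1}), and substituting it yields descent proportional to $\frac{A^2}{4B}$. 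Applying the Polyak--{\L}ojasiewicz inequality $\normsq{\grad{\xk}} \ge 2\mu\big(f(\xk)-f(\xopt)\big)$ then gives the conditional contraction $\condE{f(\xkk)-f(\xopt)}{\Sk} \le \big(1-\tfrac1\kappa\,\psi(\Sk)\big)\big(f(\xk)-f(\xopt)\big)$, where $\psi(\Sk) = \frac{(\mu_{\Sk}+\tau)^2}{(L_{\Sk}+\tau)\left((\mu_{\Sk}+\tau)+(L_{\Sk}+\tau)c_g\right)}$.

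Finally I would take the expectation over $\Sk$ and lower-bound $\E[\psi(\Sk)]$. Since $\psi$ is decreasing in $L_{\Sk}$, I replace $L_{\Sk}$ by its upper bound $\Ltilde$, leaving a function $\phi(\mu_{\Sk}+\tau)$ that is increasing and convex in its argument; Jensen together with $\E[\mu_{\Sk}] \ge \mubar$ (each subsampled average is at least as strongly convex as the average of the $\mu_i$, whose mean is $\mubar$) yields $\E[\psi(\Sk)] \ge \phi(\mubar+\tau)$. Bounding $(\mubar+\tau)+(\Ltilde+\tau)c_g \le 2\max\{\mubar+\tau,\,(\Ltilde+\tau)c_g\}$ and treating the two regimes separately produces the two-term minimum defining $\alpha$. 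The tower property then gives $\E[f(\xkk)]-f(\xopt) \le (1-\alpha)\big(\E[f(\xk)]-f(\xopt)\big)$, and induction over $k$ delivers the claimed $(1-\alpha)^T$ rate. I expect the main obstacle to be recovering $\mubar$ rather than $\mutilde$ in the final rate, which rests entirely on the convexity and monotonicity of $\phi$ and the Jensen step; the careful bookkeeping of the conditioning order (so that $\etak$ and the preconditioner are constants while averaging over $\Gk$) is the other point requiring attention, whereas the variance computation and the step-size optimization are routine.
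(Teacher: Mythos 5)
Your proposal retraces the paper's own proof step for step: the $L$-smoothness bound, conditioning on $\Sk$ while averaging over $\Gk$, the bias--variance split of $\E_{\Gk}\normsq{[\shessf{\Sk}{\xk}]^{-1}\sgradf{\Gk}{\xk}}$, the without-replacement variance bound combined with SGC (the paper's Lemma~\ref{lemma:grad-batch-sgc}, producing the same $c_g$), the merging of the linear and quadratic signal terms (your inequality $\normsq{v}\leq(\mu_{\Sk}+\tau)^{-1}v\transpose\shessf{\Sk}{\xk}v$ is algebraically identical to the paper's positive-definiteness factorization and carries the same implicit constraint $\etak\leq 2(\mu_{\Sk}+\tau)/L$), the same optimizing step-size, and the same strong-convexity step.

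The only place you genuinely deviate is the final expectation over $\Sk$, and two things are worth noting there. First, your ordering --- Jensen applied to the increasing convex map $\phi(x)=x^2/\big((\Ltilde+\tau)(x+(\Ltilde+\tau)c_g)\big)$, and only afterwards a split of the \emph{deterministic} maximum $\max\{\mubar+\tau,\,(\Ltilde+\tau)c_g\}$ --- is actually more careful than the paper's, which splits into the cases $c_g\geq\mu_{\Sk}+\tau$ versus $c_g\leq\mu_{\Sk}+\tau$ \emph{before} taking $\E_{\Sk}$, even though which case holds depends on the random sample. Second, your route, carried out correctly, yields $\alpha=\min\big\{\frac{(\mubar+\tau)^2}{2\kappa c_g(\Ltilde+\tau)^2},\,\frac{\mubar+\tau}{2\kappa(\Ltilde+\tau)}\big\}$, with an extra factor of $(\Ltilde+\tau)$ in the first term relative to the printed $\alpha$; so your closing claim that the max-split ``produces the two-term minimum defining $\alpha$'' is off by exactly that factor. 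This is not a flaw in your reasoning. The correct per-sample contraction quantity --- your $\frac{1}{\kappa}\psi(\Sk)$, which coincides with the paper's $\frac{c_1^2}{c_2(c_1+c_3)}$ --- has $c_g L(L_{\Sk}+\tau)^2$ in its denominator, but the constant $C$ displayed in the paper's proof silently replaces this by $c_g L(L_{\Sk}+\tau)$ (the resulting expression even adds the dimensionless $c_g$ to the curvature $\mu_{\Sk}+\tau$), and the stated $\alpha$ inherits that slip. In short: your proposal is the corrected version of the paper's argument and proves the theorem with the honest, slightly weaker constant; you should not expect to recover the printed first term of $\alpha$ by any valid sequence of these bounds.
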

The proof is given in Appendix~\ref{app:ssn-global}. While the dependence on $b_{g}$ is explicit, the dependence on $b_{s}$ is through the constant $\mutilde$; as $b_{s}$ tends to $n$, $\mu_{\Sk}$ tends to $\mu$, allowing R-SSN to use a larger step-size that results in faster convergence (since $\mu \geq \bar{\mu}$). If we set $b_{g} = b_{s} = n$ and $\tau = 0$, we obtain the rate $\E[f(\x_T)] - f(\xopt) \leq (1 - 1/(2\kappa^2) )^{T} \, [\E[f(\x_0)] - f(\xopt)]$ which matches the deterministic rate~\citep[Theorem 2]{karimireddy2018global} up to a constant factor of $2$. The minimum batch-size required to achieve the deterministic rate is $b_g\geq n/(1+(\mubar+\tau)(n-1)/(\rho-1))$. Similar to SGD~\citep{vaswani2018fast, schmidt2013fast}, the interpolation condition allows R-SSN with a constant batch-size to obtain Q-linear convergence. In the absence of interpolation, SSN has only been shown to achieve an R-linear rate by increasing the batch size geometrically for the subsampled gradient~\citep{bollapragada2018exact}. Next, we analyze the convergence properties of R-SSN in a local neighbourhood of the solution. 

\subsection{Local convergence}
\label{sec:ssn-local}
To analyze the local convergence of R-SSN, we make additional assumptions. Similar to other local convergence results~\citep{roosta2016subb}, we assume that the Hessian is $M$-Lipschitz continuous, implying that for all $w, v\in\R^d$, $\norm{\hessf{w}-\hessf{v}} \leq M \norm{w-v}.$ We also assume that the \emph{moments of the iterates are bounded}, implying that for all iterates $\xk$, there exists a constant $\gamma$ such that $\E \, [ \normsq{\xk - \xopt} \, ] \leq \gamma \; [ \, \E\norm{\xk-\xopt} \, ]^2$ where the expectation is taken over the entire history up until step $k$. If the iterates lie within a bounded set, then the above assumption holds for some finite $\gamma$~\citep{bollapragada2018exact, berahas2017investigation, harikandeh2015stopwasting}. We also assume that the sample variance of the subsampled Hessian is bounded, namely $
\frac{1}{n-1}\sum_{i=1}^n[\normsq{\nabla^2f_{i}(\x)-\hessf{\x}}]\leq\sigma^2$ for some $\sigma>0$. 

\begin{theorem}[Local convergence]
\label{thm:inexactnewtonlocal}
Under the same assumptions (a) - (d) of Theorem~\ref{thm:globalinexactnewton} along with (e) $M$-Lipschitz continuity of the Hessian, (f) $\gamma$-bounded moments of iterates, and (g) $\sigma$-bounded variance of the subsampled Hessian, the sequence $\{\xk\}_{k\geq 0}$ generated by R-SSN with (\rnum{1}) unit step-size $\eta_k = 1$ and (\rnum{2}) growing batch-sizes satisfying
\begin{equation*}
b_{g_k} \geq \frac{n}{(\frac{n-1}{\rho-1})\normsq{\gradf{\x_k}}+1} \text{, } b_{s_k}\geq\frac{n}{\frac{n}{\sigma^2}\norm{\gradf{\x_k}}+1} 
\end{equation*}
converges to $\xopt$ at a linear-quadratic rate
\begin{align*}
\mathbb{E} \norm{\x_{k+1}-\x^*} \leq & \frac{\gamma (M + 2 L + 2 L^2 )}{2 (\mutilde+\tau)} \, \left(\E \norm{\x_k-\x^*} \right)^2 \\
& + \vspace{2ex} \frac{\tau}{\mutilde+\tau}\E\norm{\xk-\xopt}
\end{align*}
in a local neighbourhood of the solution $\norm{\x_0-\x^*}\leq \frac{2(\tilde{\mu}+\tau)}{\gamma(M+2L+2L^2)}$. Furthermore, when $\tau=0$ and $\mutilde>0$, R-SSN can achieve local quadratic convergence
\begin{equation*}
\E \norm{\x_{k+1}-\x^*} \leq \gamma \Big(\frac{M+2L+2L^2}{2\tilde{\mu}}\Big)\left(\E\norm{\x_k-\x^*}\right)^2.
\end{equation*}
\end{theorem}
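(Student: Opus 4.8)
The plan is to reproduce the classical Newton--contraction argument while carrying along the two independent sources of stochastic error (the subsampled gradient and the subsampled Hessian) and the Levenberg--Marquardt term as separate pieces, with the growing batch-sizes calibrated so that each stochastic error decays at least as fast as $\norm{\gradf{\xk}}$ near the solution. Using the unit step-size and writing the regularized subsampled Hessian as $\shessf{\batchS_k}{\xk}$, I would start from
\begin{equation*}
\xkk - \xopt = \left[\shessf{\batchS_k}{\xk}\right]^{-1}\left[\shessf{\batchS_k}{\xk}(\xk - \xopt) - \sgradf{\batchG_k}{\xk}\right],
\end{equation*}
and factor out the deterministic bound $\norm{[\shessf{\batchS_k}{\xk}]^{-1}} \le 1/(\mutilde+\tau)$, which holds for every realization by assumption (c). The heart of the proof is to split the bracketed residual into four terms by adding and subtracting $\hessf{\xk}(\xk-\xopt)$ and $\gradf{\xk}$: the deterministic Newton residual $\hessf{\xk}(\xk-\xopt)-\gradf{\xk}$; the Hessian-subsampling error $[(\shessf{\batchS_k}{\xk}-\tau I_d)-\hessf{\xk}](\xk-\xopt)$; the regularization term $\tau(\xk-\xopt)$; and the gradient-subsampling error $\gradf{\xk}-\sgradf{\batchG_k}{\xk}$.

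Next I would bound each piece. For the Newton residual I would use the integral (Taylor-remainder) form together with assumption (e) and $\gradf{\xopt}=0$ to get $\norm{\hessf{\xk}(\xk-\xopt)-\gradf{\xk}}\le \tfrac{M}{2}\normsq{\xk-\xopt}$, which produces the $M$ term. The regularization term is exactly $\tau\norm{\xk-\xopt}$ and is the sole source of the linear term $\tfrac{\tau}{\mutilde+\tau}\norm{\xk-\xopt}$. For the gradient error I would take the conditional expectation over $\batchG_k$, apply Jensen ($\E_k\norm{\cdot}\le\sqrt{\E_k\normsq{\cdot}}$), invoke the without-replacement sampling-variance identity and the $\rho$-SGC to obtain $\E_k\normsq{\gradf{\xk}-\sgradf{\batchG_k}{\xk}}\le \tfrac{(n-b_{g_k})(\rho-1)}{(n-1)b_{g_k}}\normsq{\gradf{\xk}}$; the prescribed $b_{g_k}$ makes this at most $\norm{\gradf{\xk}}^4$, and $L$-smoothness with $\gradf{\xopt}=0$ gives $\norm{\gradf{\xk}}\le L\norm{\xk-\xopt}$, so the piece is bounded by $L^2\normsq{\xk-\xopt}$, yielding the $2L^2$ term. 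The Hessian error is handled identically, using assumption (g) and the independence of $\batchS_k$ and $\batchG_k$: the without-replacement variance of the subsampled Hessian is controlled by $\sigma^2$, and the prescribed $b_{s_k}$ is chosen so that $\E_k\norm{(\shessf{\batchS_k}{\xk}-\tau I_d)-\hessf{\xk}}$ is driven down to the order of $\norm{\gradf{\xk}}\le L\norm{\xk-\xopt}$; multiplying by $\norm{\xk-\xopt}$ then gives the $2L$ term.

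Assembling the four bounds and dividing by $\mutilde+\tau$ yields the conditional per-step estimate $\E_k\norm{\xkk-\xopt}\le \tfrac{M+2L+2L^2}{2(\mutilde+\tau)}\normsq{\xk-\xopt}+\tfrac{\tau}{\mutilde+\tau}\norm{\xk-\xopt}$. I would then take total expectation and invoke the bounded-moments assumption (f) to replace $\E\normsq{\xk-\xopt}$ by $\gamma(\E\norm{\xk-\xopt})^2$, which produces exactly the stated recursion. The neighbourhood condition $\norm{\x_0-\xopt}\le \tfrac{2(\mutilde+\tau)}{\gamma(M+2L+2L^2)}$ is precisely what makes the quadratic coefficient satisfy $\tfrac{\gamma(M+2L+2L^2)}{2(\mutilde+\tau)}\,\E\norm{\xk-\xopt}\le 1$, so the quadratic term never dominates and the recursion can be iterated; a short induction keeps the iterates in this region. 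Specializing to $\tau=0$ (with $\mutilde>0$) removes the linear term and leaves the pure quadratic rate, for which $\E\norm{\xkk-\xopt}\le\gamma\tfrac{M+2L+2L^2}{2\mutilde}(\E\norm{\xk-\xopt})^2$ gives quadratic convergence whenever the coefficient times $\E\norm{\x_0-\xopt}$ is below one.

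The main obstacle is the stochastic Hessian term. Unlike the gradient error, which vanishes at $\xopt$ by interpolation so that its variance is automatically $O(\normsq{\gradf{\xk}})$ under the SGC, the Hessian variance $\sigma^2$ does \emph{not} vanish at the solution; the only available lever is the batch-size $b_{s_k}$, so the delicate point is to grow it fast enough that $\E_k\norm{(\shessf{\batchS_k}{\xk}-\tau I_d)-\hessf{\xk}}$ shrinks at the same linear-in-$\norm{\xk-\xopt}$ rate as the curvature terms, since any slower growth degrades the quadratic rate. A second, more technical, subtlety is closing the recursion in the first moment $\E\norm{\xk-\xopt}$ rather than the second moment: the per-step argument naturally produces $\E\normsq{\xk-\xopt}$, and it is exactly the bounded-moments assumption (f) that permits the passage to $(\E\norm{\xk-\xopt})^2$ so the contraction can be stated as a single scalar sequence.
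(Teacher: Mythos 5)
Your proposal follows essentially the same route as the paper's proof: the same factoring of $\xkk-\xopt$ through $\left[\shessf{\batchS_k}{\xk}\right]^{-1}$ with the operator-norm bound $1/(\mutilde+\tau)$, the same decomposition into Newton residual, Hessian-subsampling error, and gradient-subsampling error (your explicit fourth term $\tau(\xk-\xopt)$ is exactly what the paper extracts from the Hessian term via the triangle inequality), the same bounds on each piece (Taylor remainder with the $M$-Lipschitz Hessian; without-replacement sampling variance combined with the SGC, assumption (g), the prescribed batch-size rules, and $L$-smoothness), and the same final passage from $\E\normsq{\xk-\xopt}$ to $\gamma\left(\E\norm{\xk-\xopt}\right)^2$ via assumption (f). No substantive differences.
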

The proof is provided in Appendix~\ref{app:ssn-local}. Note that the constant $\frac{\tau}{\mutilde+\tau}$ on the linear term is less than one and hence contraction is guaranteed, while the constant on the quadratic term is not required to be less than one to guarantee convergence~\citep{nesterov2018lectures}. 

This theorem states that if we progressively increase the batch size for both the subsampled Hessian and the subsampled gradient, then we can obtain a local linear-quadratic convergence rate. For inexact Newton methods to obtain quadratic convergence, it has been shown that the error term needs to decrease as a quadratic function of the gradient~\citep{dembo1982inexact}, which is not provided by SGC, and hence the need for additional techniques such as batch growing. The required geometric growth rate for $\batchG_k$ is the same as that of SGD to obtain linear convergence without variance-reduction or interpolation~\citep{de2016big, friedlander2012hybrid}. Note that the proof can be easily modified to obtain a slightly worse linear-quadratic rate when using a subsampled Hessian with a constant batch-size. In addition, following ~\citet{ye2017approximate}, we can relax the Lipschitz Hessian assumption and obtain a slower superlinear convergence. Unlike the explicit quadratic rate above, in the absence of interpolation, SSN without regularization has only been shown to achieve an asymptotic superlinear rate~\citep{bollapragada2018exact}. Moreover, in this case, $b_{g_k}$ needs to increase at a rate that is faster than geometric, considerably restricting its practical applicability.

The following corollary (proved in Appendix~\ref{sec:decreasing-tau}) states that if we decay the LM-regularization sequence $\tau_k$ proportional to the gradient norm, R-SSN can achieve quadratic convergence for strongly-convex functions. In fact, this decay rate is inversely proportional to the growth of the batch size for the subsampled Hessian, indicating that larger batch-sizes require smaller regularization. This relationship between the regularization and sample size is consistent with the observations of~\citet{ye2017approximate}.
\begin{corollary}
\label{corollary:decreasing-tau}
Under the same assumptions as Theorem~\ref{thm:inexactnewtonlocal}, if we decrease the regularization term according to $\tau_k\leq \norm{\gradf{\xk}}$, R-SSN can achieve local quadratic convergence with $\norm{\x_0-\xopt}\leq \frac{2 (\mutilde+\tau_k)}{\gamma (M + 4L + 2 L^2 )}$:
\begin{equation*}
 \mathbb{E} \norm{\x_{k+1}-\x^*} \leq \frac{\gamma (M + 4L + 2 L^2 )}{2 (\mutilde_k+\tau_k)} \, \left(\E \norm{\x_k-\x^*} \right)^2,
\end{equation*}
where $\mutilde_k$ is the minimum eigenvalue of $\nabla^2f_{\batchS_k}(\xk)$ over all batches of size $|\batchS_k|$.
\end{corollary}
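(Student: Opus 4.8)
The plan is to reuse the per-iteration error recursion developed in the proof of \cref{thm:inexactnewtonlocal}, but to track the iteration-dependent regularization $\tau_k$ and the batch-dependent curvature bound $\mutilde_k$ rather than fixed constants. The only genuinely new idea is that tying $\tau_k$ to the gradient norm turns the \emph{linear} regularization-bias term $\tfrac{\tau}{\mutilde+\tau}\E\norm{\xk-\xopt}$ appearing in \cref{thm:inexactnewtonlocal} into a second \emph{quadratic} term. The price paid for this conversion is exactly the extra $2L$ that upgrades the coefficient from $M+2L+2L^2$ to $M+4L+2L^2$, which is why the corollary inherits the quadratic shape of the bound with only this cosmetic change.

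Concretely, with unit step-size the Newton error can be written as
\begin{equation*}
\xkk-\xopt = \left[\shessf{\Sk}{\xk}\right]^{-1}\Big( \nabla^2 f_{\Sk}(\xk)(\xk-\xopt) - \sgradf{\Gk}{\xk} + \tau_k(\xk-\xopt) \Big),
\end{equation*}
where the regularized subsampled Hessian $\shessf{\Sk}{\xk}=\nabla^2 f_{\Sk}(\xk)+\tau_k I_d$ has smallest eigenvalue at least $\mutilde_k+\tau_k$. Exactly as in \cref{thm:inexactnewtonlocal}, the first two terms contribute the quadratic part: $M$-Lipschitz continuity of the Hessian together with the $\rho$-SGC control of the subsampled gradient and the $\sigma$-bounded Hessian variance yield, conditionally on the history, a bound of the form $\tfrac{M+2L+2L^2}{2(\mutilde_k+\tau_k)}\normsq{\xk-\xopt}$, while the regularization term contributes the residual $\tfrac{\tau_k}{\mutilde_k+\tau_k}\norm{\xk-\xopt}$.

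The key step is to turn this residual into a quadratic term. Since $f$ is $L$-smooth and $\gradf{\xopt}=0$, we have $\norm{\gradf{\xk}}=\norm{\gradf{\xk}-\gradf{\xopt}}\leq L\norm{\xk-\xopt}$, so the prescribed choice $\tau_k\leq\norm{\gradf{\xk}}$ gives $\tau_k\leq L\norm{\xk-\xopt}$ pathwise. Substituting,
\begin{equation*}
\frac{\tau_k}{\mutilde_k+\tau_k}\norm{\xk-\xopt}\leq \frac{L}{\mutilde_k+\tau_k}\normsq{\xk-\xopt},
\end{equation*}
which is now quadratic in $\norm{\xk-\xopt}$. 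Adding this to the quadratic part gives the conditional coefficient $\tfrac{M+2L+2L^2}{2(\mutilde_k+\tau_k)}+\tfrac{L}{\mutilde_k+\tau_k}=\tfrac{M+4L+2L^2}{2(\mutilde_k+\tau_k)}$; taking expectations and invoking the $\gamma$-bounded-moments assumption $\E\normsq{\xk-\xopt}\leq\gamma(\E\norm{\xk-\xopt})^2$ then replaces $\E\normsq{\xk-\xopt}$ by $(\E\norm{\xk-\xopt})^2$ and produces the claimed rate $\tfrac{\gamma(M+4L+2L^2)}{2(\mutilde_k+\tau_k)}(\E\norm{\xk-\xopt})^2$. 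Finally, the stated neighbourhood $\norm{\x_0-\xopt}\leq\tfrac{2(\mutilde_k+\tau_k)}{\gamma(M+4L+2L^2)}$ makes the product of this coefficient with the initial distance at most one, so a short induction shows that the iterates remain in the neighbourhood and the quadratic recursion contracts.

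The main subtlety is that both $\tau_k$ and $\mutilde_k$ are random (functions of $\xk$ and of the batch $\Sk$), so the substitution $\tau_k\leq L\norm{\xk-\xopt}$ must be carried out at the pathwise/conditional level \emph{before} the moment inequality is applied; only after \emph{both} contributions are quadratic in $\norm{\xk-\xopt}$ can the $\gamma$-bound legitimately be used to pass to $(\E\norm{\xk-\xopt})^2$. One must also verify along the induction that the lower bound $\mutilde_k+\tau_k$ on the smallest eigenvalue of $\shessf{\Sk}{\xk}$ is preserved, so that the denominators stay controlled throughout; this is precisely where the neighbourhood condition and the eigenvalue bounds of \cref{sec:background} are used. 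Everything else is inherited verbatim from the proof of \cref{thm:inexactnewtonlocal}.
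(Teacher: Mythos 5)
Your proposal is correct and takes essentially the same route as the paper's proof: both rerun the error recursion of Theorem~\ref{thm:inexactnewtonlocal} with $\tau_k$ and $\mutilde_k$ in place of $\tau$ and $\mutilde$, and both hinge on the observation that $\tau_k \leq \norm{\gradf{\xk}} \leq L\norm{\xk-\xopt}$ (by $L$-smoothness and $\gradf{\xopt}=0$) converts the linear regularization term into a quadratic one, producing exactly the extra $2L$ in the constant before the $\gamma$-bounded-moments step. The only cosmetic difference is that you isolate $\tau_k(\xk-\xopt)$ explicitly in the initial algebraic decomposition, whereas the paper absorbs $\tau_k I$ into the subsampled-Hessian deviation term and extracts the same $\tau_k\norm{\xk-\xopt}$ residual there.
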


Since $\mutilde\leq\mutilde_k$, this gives a tighter guarantee on the decrease in suboptimality at every iteration as the batch size grows and $\tau_k$ decreases. On the other hand, if we make a stronger growth assumption on the stochastic gradients such that $\mathbb{E}_i\normsq{\sgradf{i}{\x}}\leq\rho\norm{\gradf{\x}}^4$, then R-SSN can achieve local quadratic convergence using only a \emph{constant} batch-size for the subsampled gradient and the same growth rate for the subsampled Hessian. We state this result as Corollary~\ref{cor:SSN-local-stronger-SGC}, but we acknowledge that this assumption might be too restrictive to be useful in practice. 

\section{R-SSN under Self-Concordance}
\label{sec:theory-self-concordance}
We now analyze the convergence of R-SSN for the class of strictly-convex self-concordant functions. A function $f$ is called \textit{standard self-concordant}\footnote{If $f$ is self-concordant with parameter $C\geq 0$, re-scaling it by a factor $C^2/4$ makes it standard self-concordant~\citep[Corollary 5.1.3]{nesterov2018lectures}, and hence we consider only standard self-concordant functions in this work.}  if the following inequality holds for any $\x$ and direction $u \in \R^d$,
\begin{align*}
\vert u\transpose \, (\tensf{\x}[u]) \, u \vert \leq 2\, \indnorm{u}{\hessf{\x}}^{3/2} \; .
\end{align*}
Here, $\tensf{\x}$ denotes the third derivative tensor at $\x$ and $\tensf{\x}[u]$ represents the matrix corresponding to the projection of this tensor on vector $u$. Functions in this (generalized) class include common losses such as the Huber loss~\citep{marteau2019beyond}, smoothed hinge loss~\citep{zhang2015disco}, and the logistic loss~\citep{bach2010self}. Self-concordant functions enable an affine-invariant analysis of Newton's method and have convergence rates similar to that of smooth and strongly-convex functions, but without problem-dependent constants~\citep{nesterov2018lectures, boyd2004convex}. To characterize the convergence rate of R-SSN under self-concordance, we define the regularized Newton decrement at $\x$ as:
\begin{equation*}
\nd{\x} \defeq \langle\gradf{\x}\,,\,\left[\hessf{\x}+\tau I_d\right]^{-1}\gradf{\x}\rangle^{1/2}    
\end{equation*}
When $\tau = 0$, this corresponds to the standard definition of the Newton decrement~\citep{nesterov2018lectures}. If we denote the standard Newton decrement as $\lambda^0(\x)$, then the regularized version can be bounded as $\nd{\x} \leq \lambda^0(\x)$. We also introduce the regularized stochastic Newton decrement as
\begin{equation*}
\tilde{\lambda}_{i,j}(\x)\defeq\langle\sgradf{i}{\x}\,,\,\left[\shessf{j}{\x}\right]^{-1}\sgradf{i}{\x}\rangle^{1/2}    
\end{equation*}
for independent samples $i$ and $j$. In Proposition~\ref{prop:newton-decr-sgc} (Appendix~\ref{app:common-lemmas}), we show that if $f$ satisfies the SGC and the subsampled Hessian has bounded eigenvalues, then $f$ satisfies a similar growth condition for the regularized stochastic Newton decrement: for all $\x$ and $j$, there exists $\rhond \geq 1$ such that 
\begin{align*}
\E_{i} \left[\tilde{\lambda}_{i,j}^2(\x) \right] \leq \rhond \, \lambda^2(\x) \tag{Newton decrement SGC}.
\end{align*}
For ease of notation, we use $\tilde{\lambda}_k $ to denote the regularized stochastic Newton decrement computed using samples $\batchG_k$ and $\batchS_k$. We now consider an update step similar to (\ref{eq:inexactnewtonupdate}) but with a step-size adjusted to $\tilde{\lambda}_k$,
\begin{equation}
\label{eq:update-all-stochastic-nd}
\xkk = \xk - \frac{c \; \eta}{1+\eta \; \tilde{\lambda}_k}\left[\shessf{\batchS_k}{\xk}\right]^{-1}\sgradf{\batchG_k}{\xk}
\end{equation}
where $c,\eta\in(0,1]$. For the next theorem, we assume the iterates lie in a bounded set such that $\norm{\xk-\xopt}\leq D$ for all $k$~\citep{harikandeh2015stopwasting}.
\begin{theorem}
\label{thm:full-stochastic-self-concord}
Under (a) self-concordance (b) $L$-smoothness of $f$ (c) $[\mutilde + \tau,\Ltilde + \tau]$-bounded eigenvalues of the subsampled Hessian (d) $\rhond=\frac{\rho L}{\mutilde+\tau}$-Newton decrement SGC and (e) bounded iterates, the sequence $\{\xk\}_{k \in [0,m]}$ generated by R-SSN in $(\ref{eq:update-all-stochastic-nd})$ with (i) $c=\sqrt{\frac{\mutilde+\tau}{L}}$, $\eta\in\Big(0,\frac{c}{\rhond(1+\Ltilde D/(\mutilde+\tau))}\Big]$ and (ii) constant batch sizes $\geq 1$ converges to $\xopt$ from an arbitrary initialization $\x_0$ at a rate characterized by
\begin{equation*}
\expectationof{f(\xkk)} \leq f(\xk) - \eta\,\delta \, \omegaof{\lambda_k}.
\end{equation*}
Here $\delta\in(0,1]$ and the univariate function $\omega$ is defined as $\omegaof{t} = t - \ln(1+t)$. Furthermore, in the local neighbourhood where $\lambda_m \leq 1/6$, the sequence $\{\xk\}_{k \geq m}$ converges to $\xopt$ at a Q-linear rate,
\begin{equation*}
\expectationof{f(\x_T)} - f(\xopt) \leq \Big(1-\frac{\eta\delta}{1.26}\Big)^{T-m}(\expectationof{f(\x_m)}-f(\xopt)).
\end{equation*}
\end{theorem}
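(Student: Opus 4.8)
The plan is to combine the standard self-concordant upper bound on the function value with the damping built into the step-size $\frac{c\eta}{1+\eta\tilde{\lambda}_k}$, and then take expectations over the independent samples $\batchG_k,\batchS_k$ using the Newton-decrement SGC. I would write the update as $\xkk = \xk + t_k d_k$ with $d_k = -[\shessf{\batchS_k}{\xk}]^{-1}\sgradf{\batchG_k}{\xk}$ and $t_k = \frac{c\eta}{1+\eta\tilde{\lambda}_k}$, so that $\tilde{\lambda}_k = \indnorm{d_k}{\shessf{\batchS_k}{\xk}}$. The first step is to invoke the self-concordant upper bound: whenever $\indnorm{\xkk-\xk}{\hessf{\xk}} < 1$,
\begin{equation*}
f(\xkk) \leq f(\xk) + t_k\inner{\gradf{\xk}}{d_k} + \omegastarof{\indnorm{\xkk-\xk}{\hessf{\xk}}},
\end{equation*}
where $\omegastarof{s} = -s-\ln(1-s)$. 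The crucial role of $c=\sqrt{(\mutilde+\tau)/L}$ is to keep the step inside the unit Dikin ellipsoid: using $\hessf{\xk}\preceq LI_d$ ($L$-smoothness) and $\shessf{\batchS_k}{\xk}\succeq(\mutilde+\tau)I_d$ gives $\indnorm{d_k}{\hessf{\xk}} \leq \sqrt{L/(\mutilde+\tau)}\,\tilde{\lambda}_k = \tilde{\lambda}_k/c$, hence $\indnorm{\xkk-\xk}{\hessf{\xk}} \leq t_k\tilde{\lambda}_k/c = \frac{\eta\tilde{\lambda}_k}{1+\eta\tilde{\lambda}_k} < 1$ automatically. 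Monotonicity of $\omega_*$ and the identity $\omegastarof{\frac{s}{1+s}} = \ln(1+s) - \frac{s}{1+s}$ then turn the last term into $\ln(1+\eta\tilde{\lambda}_k) - \frac{\eta\tilde{\lambda}_k}{1+\eta\tilde{\lambda}_k}$.

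Next I would take the conditional expectation over $\batchG_k$ and $\batchS_k$. For the linear term, independence of the two batches gives $\E_{\batchG_k}\inner{\gradf{\xk}}{[\shessf{\batchS_k}{\xk}]^{-1}\sgradf{\batchG_k}{\xk}} = \inner{\gradf{\xk}}{[\shessf{\batchS_k}{\xk}]^{-1}\gradf{\xk}}$, and convexity of the matrix inverse (Jensen, with $\E_{\batchS_k}[\shessf{\batchS_k}{\xk}] = \hessf{\xk}+\tau I_d$) lower-bounds this by $\ndsq{\xk}=\lambda_k^2$. \textbf{The main obstacle} is that the damping factor $t_k$ is itself a function of the random $\tilde{\lambda}_k$, which depends on both batches, so $\E[t_k\inner{\gradf{\xk}}{d_k}]$ does not factor. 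To decouple it I would use the bounded-iterate assumption $\norm{\xk-\xopt}\leq D$ together with smoothness and interpolation to bound $\tilde{\lambda}_k$ uniformly by a constant, which through the prescribed range of $\eta$ caps $\eta\tilde{\lambda}_k$ and lets me replace $t_k$ by its worst-case value while losing only a multiplicative constant; the residual second-moment terms are controlled by the Newton-decrement SGC $\E_i[\tilde{\lambda}_{i,j}^2(\x)]\leq\rhond\,\ndsq{\x}$.

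Collecting the pieces should yield a per-step bound $\E[f(\xkk)] \leq f(\xk) - \eta\,\delta\,\omegaof{\lambda_k}$ with $\delta\in(0,1]$ absorbing $c$, $\rhond$, and the worst-case damping. The cancellation producing exactly $\omegaof{\lambda_k}=\lambda_k-\ln(1+\lambda_k)$ mirrors the deterministic damped-Newton computation $-\frac{\lambda^2}{1+\lambda} + \omegastarof{\frac{\lambda}{1+\lambda}} = -\omegaof{\lambda}$, where the damping $\frac{1}{1+\eta\tilde{\lambda}_k}$ is precisely what makes the linear decrease scale like $\lambda_k$ (rather than $\lambda_k^2$) for large decrements.

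For the local Q-linear rate I would convert the decrease in $\omegaof{\lambda_k}$ into a contraction in suboptimality using the self-concordant sandwich $f(\xk)-f(\xopt)\leq\omegastarof{\lambda_k}$, valid for $\lambda_k<1$. Since $\omegastarof{t}/\omegaof{t}$ is increasing and equals roughly $1.25$ at $t=1/6$, in the region $\lambda_k\leq 1/6$ we get $f(\xk)-f(\xopt)\leq 1.26\,\omegaof{\lambda_k}$. Substituting into the per-step bound gives $\E[f(\xkk)]-f(\xopt)\leq\big(1-\tfrac{\eta\delta}{1.26}\big)(f(\xk)-f(\xopt))$, and iterating from $k=m$ to $T$ with the tower rule yields the stated geometric decay. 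The remaining subtlety is arguing that once $\lambda_m\leq 1/6$ the iterates remain (in expectation) within this neighbourhood so the contraction applies at every $k\geq m$; this is again where the bounded-iterate assumption and the monotone expected decrease enter.
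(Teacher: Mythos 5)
Your proposal follows essentially the same route as the paper's proof: the choice $c=\sqrt{(\mutilde+\tau)/L}$ to keep the step inside the Dikin ellipsoid, Nesterov's self-concordant descent bound combined with the identity $\omegastarof{\omegaprimeof{t}}=t\,\omegaprimeof{t}-\omegaof{t}$, decoupling of the random damping via the uniform bound $\tilde{\lambda}_k\leq(\Ltilde+\tau)D/\sqrt{\mutilde+\tau}$ (interpolation plus bounded iterates), the Newton-decrement SGC for the second-moment term, and the bound $\omegastarof{t}\leq 1.26\,\omegaof{t}$ on $[0,1/6]$ together with $f(\x)-f(\xopt)\leq\omegastarof{\lambda^0(\x)}$ for the local Q-linear rate. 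The only step you leave implicit---converting the surviving term $\expectationof{\omegaof{\eta\tilde{\lambda}_k}}$ into $\omegaof{\eta\lambda_k}$---is carried out in the paper by Jensen's inequality (convexity of $\omega$ and of the norm) plus operator convexity of the inverse square root, i.e.\ the same operator-Jensen tool you already invoke for the linear term.
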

The proof is given in Appendix~\ref{app:full-stochastic-self-concord}. The above result gives an insight into the convergence properties of R-SSN for loss functions that are self-concordant but not necessarily strongly-convex. Note that the analysis of the deterministic Newton's method for self-concordant functions yields a problem-independent local quadratic convergence rate, meaning it does not rely on the condition number of $f$. However, it is non-trivial to improve the above analysis and obtain a similar affine-invariant result. Similarly, the algorithm parameters for the inexact Newton method in~\citet{zhang2015disco} depend on the condition number. 
\section{Stochastic BFGS}
\label{sec:preconditioned-sgd}
Consider the stochastic BFGS update, 
\begin{equation}
\label{eq:precondsgdupdate}
\xkk = \xk -\etak \, \Hk \, \sgradf{\batchG_k}{\xk}
\end{equation}
where $\Hk$ is a positive definite matrix constructed to approximate the inverse Hessian $\left[\hessf{\xk}\right]^{-1}$. As in previous works~\citep{bollapragada2018progressive, moritz2016linearly, lucchi2015variance, sunehag2009variable}, we assume that $\Hk$ has bounded eigenvalues such that $\lambda_1I \preceq \Hk \preceq\lambda_dI$. We now show that under SGC, stochastic BFGS achieves global linear convergence with a constant step-size. 
\begin{theorem}
\label{thm:lbfgs}
Under (a) $\mu$-strong-convexity, (b) $L$-smoothness of $f$, (c) $\left[\lambda_1, \lambda_d \right]$-bounded eigenvalues of $\Hk$ and (d) $\rho$-SGC, the sequence $\{\xk\}_{k\geq 0}$ generated by stochastic BFGS with (i) a constant step-size $\etak = \eta = \frac{\lambda_1}{c_g L\lambda_d^2}$ and (ii) and a constant batch size $b_{g_k} = b_g$ converges to $\xopt$ at a linear rate from an arbitrary initialization $\x_0$,
\begin{equation*}
\E[f(\x_{T})]-f(\xopt) \leq  \Big(1-\frac{\mu \lambda_1^2}{c_g \, L \lambda_d^2}\Big)^{T} (f(\x_0)-f(\xopt)) 
\end{equation*}
where $c_g = \frac{(n - b_g) \, (\rho - 1)}{(n-1) \, b_g} + 1$.
\end{theorem}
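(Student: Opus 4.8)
The plan is to follow the standard template for stochastic methods under the SGC: combine an $L$-smoothness descent inequality with the eigenvalue bounds on $\Hk$ and a mini-batch version of the SGC, then invoke $\mu$-strong convexity to convert the resulting gradient-norm decrease into a per-step contraction of the suboptimality. First I would apply $L$-smoothness along the update direction to obtain $f(\xkk) \leq f(\xk) - \etak \inner{\gradf{\xk}}{\Hk \sgradf{\batchG_k}{\xk}} + \frac{L\etak^2}{2}\normsq{\Hk\sgradf{\batchG_k}{\xk}}$, and then take the expectation over the fresh sample $\batchG_k$ conditioned on the history up to iteration $k$, treating $\Hk$ as measurable with respect to that history (hence independent of $\batchG_k$).

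For the linear term, unbiasedness of the subsampled gradient gives $\expectationwrtof{\batchG_k}{\inner{\gradf{\xk}}{\Hk\sgradf{\batchG_k}{\xk}}} = \inner{\gradf{\xk}}{\Hk\gradf{\xk}} \geq \lambda_1 \normsq{\gradf{\xk}}$ using the lower eigenvalue bound $\Hk \succeq \lambda_1 I$. For the quadratic term, the upper bound $\norm{\Hk}\leq\lambda_d$ gives $\normsq{\Hk\sgradf{\batchG_k}{\xk}} \leq \lambda_d^2 \normsq{\sgradf{\batchG_k}{\xk}}$. The key quantitative step is to bound $\expectationwrtof{\batchG_k}{\normsq{\sgradf{\batchG_k}{\xk}}}$: I would decompose it into mean plus variance, $\normsq{\gradf{\xk}} + \Var{\sgradf{\batchG_k}{\xk}}$, and use the finite-population-correction formula for sampling $b_g$ indices without replacement, $\Var{\sgradf{\batchG_k}{\xk}} = \frac{n-b_g}{(n-1)b_g}\cdot\frac{1}{n}\sum_{i=1}^n \normsq{\gradi{\xk}-\gradf{\xk}}$. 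The inner average equals $\E_i\normsq{\gradi{\xk}} - \normsq{\gradf{\xk}}$, which the SGC bounds by $(\rho-1)\normsq{\gradf{\xk}}$, yielding $\expectationwrtof{\batchG_k}{\normsq{\sgradf{\batchG_k}{\xk}}} \leq c_g \normsq{\gradf{\xk}}$ with exactly the stated constant $c_g$.

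Substituting both bounds and plugging in $\eta = \frac{\lambda_1}{c_g L \lambda_d^2}$ makes the quadratic coefficient equal to half the linear one, leaving $\expectationwrtof{\batchG_k}{f(\xkk)} \leq f(\xk) - \frac{\lambda_1^2}{2 c_g L \lambda_d^2}\normsq{\gradf{\xk}}$. I would then invoke the Polyak-\L{}ojasiewicz inequality implied by $\mu$-strong convexity, $\normsq{\gradf{\xk}}\geq 2\mu(f(\xk)-f(\xopt))$, subtract $f(\xopt)$ from both sides, and take total expectations to obtain the one-step contraction $\expectationof{f(\xkk)}-f(\xopt) \leq \left(1 - \frac{\mu\lambda_1^2}{c_g L \lambda_d^2}\right)(\expectationof{f(\xk)}-f(\xopt))$. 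Iterating this inequality over $T$ steps and applying the tower property gives the claimed geometric rate.

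The main obstacle I anticipate is the treatment of the cross term: the clean lower bound $\inner{\gradf{\xk}}{\Hk\gradf{\xk}}\geq\lambda_1\normsq{\gradf{\xk}}$ relies on $\Hk$ being independent of the gradient batch $\batchG_k$, so I would need to make explicit the convention (standard for stochastic BFGS analyses) that $\Hk$ is built from past iterates and curvature pairs and is therefore measurable with respect to the history before $\batchG_k$ is drawn. The other delicate point is getting the without-replacement variance constant exactly right so that the final bound reproduces the definition $c_g = \frac{(n-b_g)(\rho-1)}{(n-1)b_g} + 1$; everything after that is routine algebra and the choice of $\eta$ is precisely what balances the descent terms.
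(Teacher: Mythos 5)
Your proposal is correct and follows essentially the same route as the paper's proof: the $L$-smoothness descent inequality, unbiasedness plus the eigenvalue bounds $\lambda_1 I \preceq \Hk \preceq \lambda_d I$, the without-replacement variance bound giving $\E\normsq{\sgradf{\batchG_k}{\xk}} \leq c_g \normsq{\gradf{\xk}}$ (your inline finite-population-correction computation is exactly the content of the paper's Lemma~\ref{lemma:grad-batch-sgc}), the step-size choice that halves the linear term, and the strong-convexity/PL conversion followed by recursion. Your explicit remark that $\Hk$ must be measurable with respect to the history before $\batchG_k$ is drawn is a point the paper leaves implicit, and it is handled correctly in your argument.
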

The proof is given in Appendix~\ref{sec:proof-lbfgs}. The strong-convexity assumption can be relaxed to the Polyak-{\L}ojasiewicz inequality~\citep{polyak1963gradient, karimi2016linear} while giving the same rate up to a constant factor. In the absence of interpolation, global linear convergence can only be obtained by using either variance-reduction techniques~\citep{moritz2016linearly, lucchi2015variance} or progressive batching~\citep{bollapragada2018progressive}. Similar to these works, our analysis for the stochastic BFGS method applies to preconditioned SGD where $\Hk$ can be any positive-definite preconditioner.  
\section{Experiments}
\label{sec:experiments}
We verify our theoretical results on a binary classification task on both synthetic and real datasets. We evaluate two variants of R-SSN: \texttt{R-SSN-const} that uses a constant batch-size and \texttt{R-SSN-grow} where we grow the batch size geometrically (by a constant multiplicative factor of $1.01$ in every iteration~\citep{friedlander2012hybrid}). Note that although our theoretical analysis of R-SSN requires independent batches for the subsampled gradient and Hessian, we use the same batch for both variants of R-SSN and observe that this does not adversely affect empirical performance. We use truncated conjugate gradient (CG)~\citep{hestenes1952methods} to solve for the (subsampled) Newton direction in every iteration. For each experiment, we choose the LM regularization $\tau$ via a grid search. For \texttt{R-SSN-grow}, following Corollary~\ref{corollary:decreasing-tau}, starting from the LM regularization selected by grid search, we progressively decrease $\tau$ in the same way as we increase the batch size. We evaluate stochastic L-BFGS (\texttt{sLBFGS}) with a ``memory'' of $10$. For \texttt{sLBFGS}, we use the same minibatch to compute the difference in the (subsampled) gradients to be used in the inverse Hessian approximation (this corresponds to the ``full'' overlap setting in~\citet{bollapragada2018progressive}), to which we add a small regularization to ensure positive-definiteness. For both R-SSN and \texttt{sLBFGS}, we use the stochastic line-search from~\citet{vaswani2019painless} to set the step size.  

We compare the proposed algorithms against common first-order methods: SGD, SVRG~\citep{johnson2013accelerating}, Adam~\citep{kingma2014adam} and Adagrad~\citep{duchi2011adaptive}. For all experiments, we use an (initial) batch size of $b = 100$ and run each algorithm for $200$ epochs. Here, an epoch is defined as one full pass over the dataset and does not include additional function evaluations from the line-search or CG. Subsequently, in Figure~\ref{fig:exp-kernel-main}, we plot the mean wall-clock time per epoch that takes these additional computations into account. For SGD, we compare against its generic version and with Polyak acceleration~\citep{polyak1964some}, where in both cases the step size is chosen via stochastic line-search~\citep{vaswani2019painless} with the same hyperparameters, and the acceleration hyperparameter is chosen via grid search. For SVRG, the step size is chosen via 3-fold cross validation on the training set, and we set the number of inner iterations per outer full-gradient evaluation to $n/b$. We use the default hyperparameters for Adam and Adagrad for their adaptive properties. All results are averaged across $5$ runs. 

\textbf{Synthetic datasets:}
\begin{figure*}[!t]
\centering
\includegraphics[scale=0.31]{./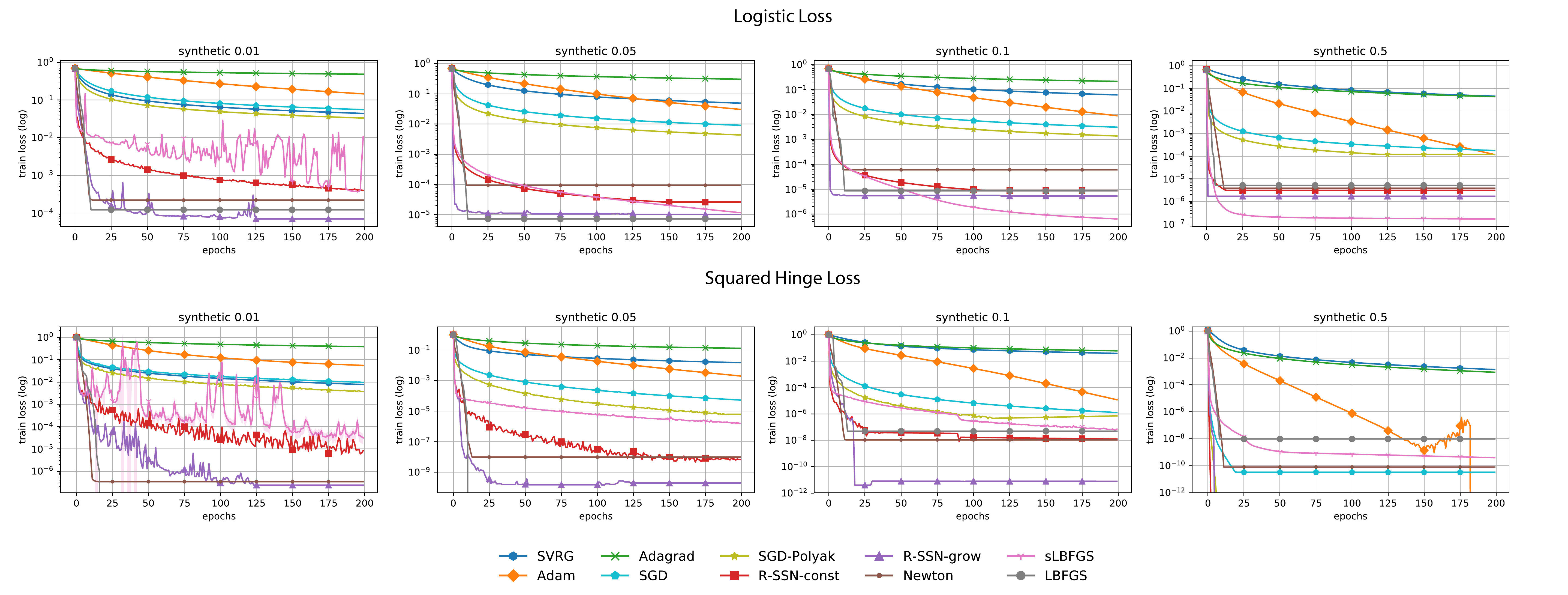}
\caption{Comparison of R-SSN variants and stochastic L-BFGS against first order methods on synthetic data where interpolation is satisfied. Both R-SSN and stochastic L-BFGS outperform first order methods. From left to right we show the results for datasets with margins \texttt{[0.01, 0.05, 0.1, 0.5]} under two different losses.}
\label{fig:exp-syn}
\end{figure*}
We first evaluate the algorithms on binary classification using synthetic and linearly-separable datasets with varying margins. Linear separability ensures the interpolation condition will hold. For each margin, we generate a dataset with $10$k examples with $d=20$ features and binary labels. For these datasets, we also compare against unregularized Newton and L-BFGS, both using the full-batch (hence deterministic). For L-BFGS, we use the PyTorch~\citep{paszke2017automatic} implementation with line search and an initial step size of $0.9$. 

In Figure~\ref{fig:exp-syn}, we show the training loss for the logistic loss (row 1) and the squared hinge loss (row 2). We observe that by incorporating second order information, R-SSN can converge much faster than first order methods. In addition, global linear convergence can be obtained using only constant batch-sizes, verifying Theorem~\ref{thm:globalinexactnewton}. Furthermore, by growing the batch size, R-SSN performs similar to the deterministic Newton method, verifying the local quadratic convergence of Theorem~\ref{thm:inexactnewtonlocal}. Although our theory only guarantees linear convergence globally (instead of superlinear) for stochastic L-BFGS under interpolation, these methods can be much faster than first order methods empirically. Finally, as we increase the margin (left to right of Figure~\ref{fig:exp-syn}), the theoretical rate under interpolation improves~\citep{vaswani2018fast}, resulting in faster and more stable convergence for the proposed algorithms. 

\textbf{Real datasets:}
\begin{figure*}[!t]
\centering
\includegraphics[scale=0.3]{./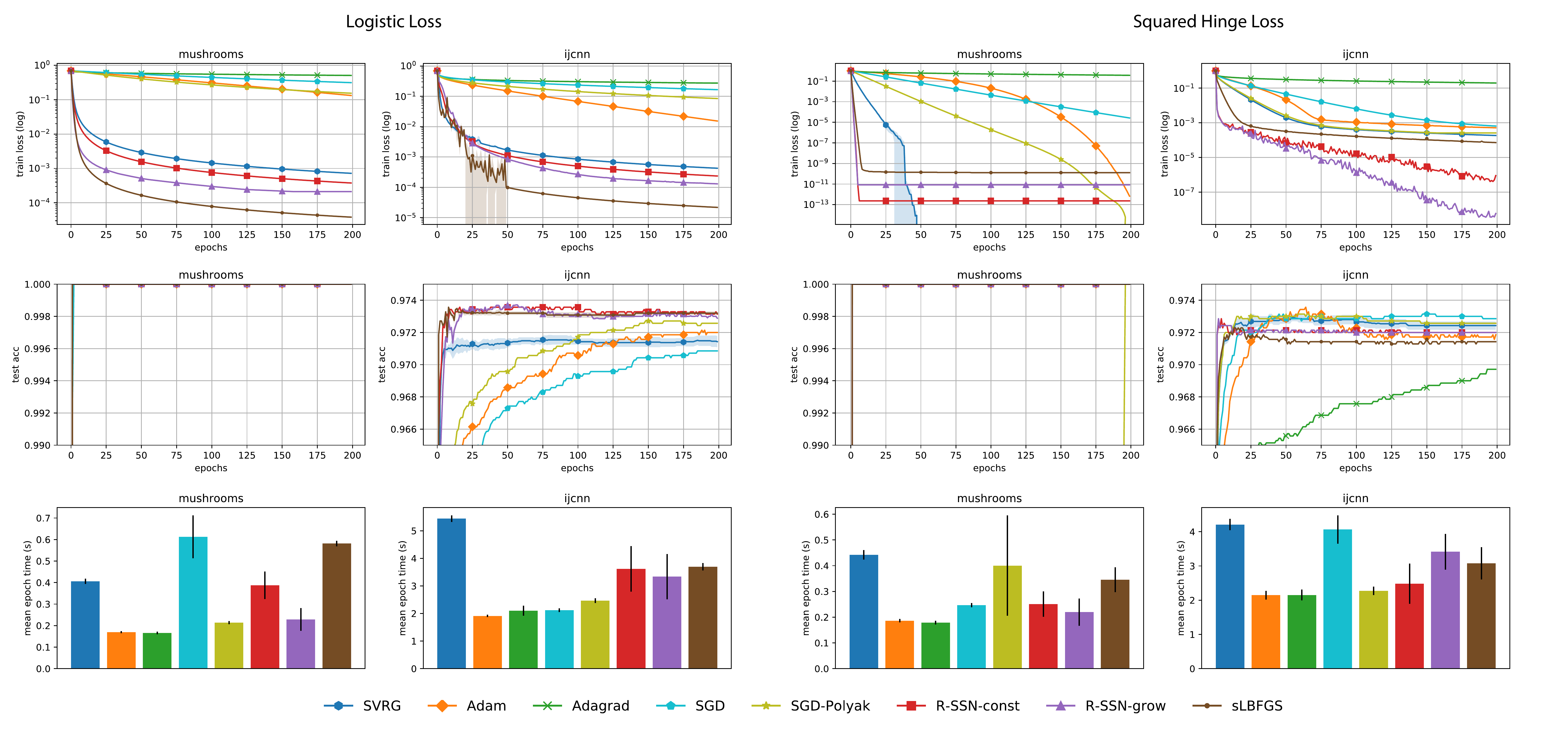}
\caption{Comparison of R-SSN variants and stochastic L-BFGS against first order methods on real datasets. Under the RBF kernel mapping, the \texttt{mushrooms} dataset satisfies interpolation, and the R-SSN variants and \texttt{sLBFGS} perform best in this setting. When interpolation is not satisfied (\texttt{ijcnn}), these methods are still competitive.}
\label{fig:exp-kernel-main}
\end{figure*}
We also consider real datasets \texttt{mushrooms}, \texttt{rcv1}, and \texttt{ijcnn} from the LIBSVM repository \citep{chang2011libsvm}, and use an $80:20$ split for the training and test set, respectively. We fit a linear model under the radial basis function (RBF) kernel. The kernel mapping results in the effective dimension being equal to the number of points in the dataset. This results in problem dimensions of $6.5$k, $20$k and $28$k for \texttt{mushrooms}, \texttt{rcv1}, and \texttt{ijcnn} respectively. The RBF-kernel bandwidths are chosen via grid search using $10$-fold cross validation on the training split following \citet{vaswani2019painless}. Note that the \texttt{mushrooms} dataset is linearly separable under the chosen kernel mapping and thus satisfies the interpolation condition. For these datasets, we limit the maximum batch size to $8192$ for \texttt{SSN-grow} to alleviate the computation and memory overhead. We show the training loss, test accuracy, as well as the mean wall-clock time per epoch. For this set of experiments, the stochastic line-search procedure~\citep{vaswani2019painless} used in conjunction with \texttt{sLBFGS} can lead to a large number of backtracking iterations, resulting in a high wall-clock time per epoch. To overcome this issue, we use a constant step-size variant of \texttt{sLBFGS} and perform a grid search over $[10^{-4}, 1]$ to select the best step-size for each experiment. In the interest of space, we refer the reader to Appendix~\ref{app:additional-exps} for results on the \texttt{rcv1} dataset. 

In the first row of Figure~\ref{fig:exp-kernel-main}, we observe when interpolation is satisfied (\texttt{mushrooms}), the R-SSN variants and \texttt{sLBFGS} outperform all other methods in terms of training loss convergence. When interpolation is not satisfied (\texttt{ijcnn}), the stochastic second-order methods are still competitive with the best performing method. In the second row, we observe that despite the fast convergence of these methods, generalization performance does not deteriorate regardless of whether interpolation is satisfied. Furthermore, since all our experiments are run on the GPU, we take advantage of parallelization and amortize the runtime of the proposed methods. This is reflected in the third row, where we plot the mean per-epoch wall-clock time for all methods. Also note that \texttt{sLBFGS} is competitive with R-SSN in terms of the number of iterations, but has a higher per-epoch cost on average. 
\section{Related Work}
\label{sec:related-work}
Besides the works discussed earlier, there have been many recent works analyzing subsampled Newton methods~\citep{byrd2011use, byrd2012sample, milzarek2018stochastic, li2019subsampled}. In particular, \citet{roosta2016suba,roosta2016subb} derive probabilistic global and local convergence rates. Similarly, \citet{xu2016sub} achieve a local linear-quadratic convergence in high probability by using non-uniform sampling to select the batch. \citet{erdogdu2015convergence} propose the NewSamp method that subsamples the Hessian followed by a regularized truncated SVD to form the inverse Hessian approximation. \citet{ye2017approximate} provide a unified analysis of the local convergence properties of well-known variants of subsampled Newton's method where the gradient is assumed to be exact. For nonconvex objectives, \citet{bergou2018subsampling} show second order results for the global convergence of subsampled Newton's method with line search.  

Recently, stochastic optimization under an interpolation condition has been analyzed by several authors. Both \citet{ma2018power} and \citet{vaswani2018fast} show that under mild conditions, SGD with constant step-size achieves linear convergence for strongly-convex functions. Additionally, it can match the deterministic rates in the convex~\citep{vaswani2018fast, schmidt2013fast, cevher2018linear} and non-convex~\citep{bassily2018exponential, vaswani2018fast} settings. The interpolation assumption also allows for momentum-type methods to achieve the accelerated rates of convergence for least-squares~\citep{gower2018accelerated} and more generally in convex settings~\citep{vaswani2018fast, liu2018mass}. Although the step size in these settings depends on unknown quantities, it has been recently shown that stochastic line-search methods based on the Armijo condition can be used to automatically set the step size and still achieve fast convergence rates~\citep{vaswani2019painless}. 
In contrast to previous works on stochastic second-order methods, we consider fully stochastic R-SSN and show global Q-linear convergence and local Q-quadratic convergence in expectation for SSN for strongly-convex objectives under the interpolation condition.

The class of self-concordant functions was first introduced by \citet{nesterov1994interior} and has been extended to include common losses such as (multi-class) logistic regression~\citep{bach2010self}. \citet{zhang2015disco} propose a distributed inexact Newton's method and provide convergence analysis for self-concordant functions. \citet{mokhtari2016adaptive, eisen2017large} introduce the Ada Newton method that computes a geometrically increasing batch-size based on a targeted statistical accuracy and show local quadratic convergence for self-concordant functions with high probability. However, none of these works analyze stochastic Newton methods for self-concordant functions. 

Quasi-Newton methods that approximate the Hessian are popular in practice due to their cheaper iteration costs compared to full Newton methods while having superlinear convergence~\citep{dennis1974characterization}. Stochastic variants of these methods have been proposed to further reduce the computation overhead from a large training set~\citep{schraudolph2007stochastic,zhou2017stochastic, liu2018acceleration, berahas2019quasi, gao2019quasi}. Combined with variance-reduction techniques from popular first-order methods, stochastic BFGS-type methods can achieve global linear convergence~\citep{kolte2015accelerating, lucchi2015variance, moritz2016linearly}. \citet{bollapragada2018progressive} use progressive batching with stochastic L-BFGS to show linear convergence for strongly-convex functions and sublinear rate for convex functions. \citet{mokhtari2018iqn} use memory to reduce the variance for stochastic BFGS and achieve a local superlinear convergence rate. \citet{kelley2002brief}'s implicit filtering method can achieve superlinear convergence using a deterministic and finite-difference-based BFGS for minimizing objectives whose noise decreases as we approach the global minimum, similar to the interpolation setting. Our work fills in the gap between stochastic quasi-Newton methods and the interpolation condition.
\vspace{-1.5mm}
\section{Conclusion}
\vspace{-1mm}
\label{sec:conclusion}
We showed that the regularized subsampled Newton method with a constant batch-size achieves linear convergence rates when minimizing smooth functions that are strongly-convex or self-concordant under the interpolation setting. We also showed that interpolation enables stochastic BFGS-type methods to converge linearly. We validated our theoretical claims via experiments using kernel functions and demonstrated the fast convergence of the proposed methods. Our theoretical and empirical results show the potential for training large over-parameterized models that satisfy the interpolation property. For future work, we aim to investigate ways to handle non-convex losses and further scale these methods to optimize over millions of parameters.

\clearpage
\section{Acknowledgements}
We would like to thank Raghu Bollapragada, Nicolas Le Roux, Frederik Kunstner, Chris Liaw and Aaron Mishkin for helpful discussions. This research was partially supported by the Canada CIFAR AI Chair Program, by the NSERC Discovery Grants RGPIN-2017-06936 and 2015-06068, an IVADO postdoctoral scholarship, and an NSERC Canada Graduate Scholarships-Master’s. Simon Lacoste-Julien is a CIFAR Associate Fellow in the Learning in Machines \& Brains program and a Canada CIFAR AI Chair.
\bibliographystyle{plainnat}
\bibliography{ref}

\begin{thebibliography}{83}
\providecommand{\natexlab}[1]{#1}
\providecommand{\url}[1]{\texttt{#1}}
\expandafter\ifx\csname urlstyle\endcsname\relax
  \providecommand{\doi}[1]{doi: #1}\else
  \providecommand{\doi}{doi: \begingroup \urlstyle{rm}\Url}\fi

\bibitem[Amari(1998)]{amari1998natural}
Shun-Ichi Amari.
\newblock Natural gradient works efficiently in learning.
\newblock \emph{Neural computation}, 10\penalty0 (2):\penalty0 251--276, 1998.

\bibitem[Bach et~al.(2010)]{bach2010self}
Francis Bach et~al.
\newblock Self-concordant analysis for logistic regression.
\newblock \emph{Electronic Journal of Statistics}, 4:\penalty0 384--414, 2010.

\bibitem[Bassily et~al.(2018)Bassily, Belkin, and Ma]{bassily2018exponential}
Raef Bassily, Mikhail Belkin, and Siyuan Ma.
\newblock On exponential convergence of {SGD} in non-convex over-parametrized
  learning.
\newblock \emph{arXiv preprint arXiv:1811.02564}, 2018.

\bibitem[Becker et~al.(1988)Becker, Le~Cun, et~al.]{becker1988improving}
Sue Becker, Yann Le~Cun, et~al.
\newblock Improving the convergence of back-propagation learning with second
  order methods.
\newblock In \emph{Proceedings of the 1988 connectionist models summer school},
  1988.

\bibitem[Belkin et~al.(2019)Belkin, Rakhlin, and Tsybakov]{belkin2018does}
Mikhail Belkin, Alexander Rakhlin, and Alexandre~B. Tsybakov.
\newblock Does data interpolation contradict statistical optimality?
\newblock In \emph{AISTATS}, 2019.

\bibitem[Bellavia et~al.(2018)Bellavia, Krejic, and
  Jerinkic]{bellavia2018subsampled}
Stefania Bellavia, Natasa Krejic, and Natasa~Krklec Jerinkic.
\newblock {Subsampled Inexact Newton methods for minimizing large sums of
  convex functions}.
\newblock \emph{arXiv preprint arXiv:1811.05730}, 2018.

\bibitem[Berahas et~al.(2019)Berahas, Jahani, and
  Tak{\'a}{\v{c}}]{berahas2019quasi}
Albert~S Berahas, Majid Jahani, and Martin Tak{\'a}{\v{c}}.
\newblock Quasi-newton methods for deep learning: Forget the past, just sample.
\newblock \emph{arXiv preprint arXiv:1901.09997}, 2019.

\bibitem[Berahas et~al.(2020)Berahas, Bollapragada, and
  Nocedal]{berahas2017investigation}
Albert~S Berahas, Raghu Bollapragada, and Jorge Nocedal.
\newblock {An investigation of Newton-sketch and subsampled Newton methods}.
\newblock \emph{Optimization Methods and Software}, pages 1--20, 2020.

\bibitem[Bergou et~al.(2018)Bergou, Diouane, Kungurtsev, and
  Royer]{bergou2018subsampling}
El-houcine Bergou, Youssef Diouane, Vyacheslav Kungurtsev, and Cl{\'e}ment~W
  Royer.
\newblock A subsampling line-search method with second-order results.
\newblock \emph{arXiv preprint arXiv:1810.07211}, 2018.

\bibitem[Bertsekas(1997)]{bertsekas1997nonlinear}
Dimitri~P Bertsekas.
\newblock Nonlinear programming.
\newblock \emph{Journal of the Operational Research Society}, 1997.

\bibitem[Bollapragada et~al.(2018{\natexlab{a}})Bollapragada, Byrd, and
  Nocedal]{bollapragada2018exact}
Raghu Bollapragada, Richard~H Byrd, and Jorge Nocedal.
\newblock Exact and inexact subsampled {Newton} methods for optimization.
\newblock \emph{IMA Journal of Numerical Analysis}, 39\penalty0 (2):\penalty0
  545--578, 2018{\natexlab{a}}.

\bibitem[Bollapragada et~al.(2018{\natexlab{b}})Bollapragada, Mudigere,
  Nocedal, Shi, and Tang]{bollapragada2018progressive}
Raghu Bollapragada, Dheevatsa Mudigere, Jorge Nocedal, Hao{-}Jun~Michael Shi,
  and Ping Tak~Peter Tang.
\newblock A progressive batching {L-BFGS} method for machine learning.
\newblock In \emph{ICML}, 2018{\natexlab{b}}.

\bibitem[Boyd and Vandenberghe(2004)]{boyd2004convex}
Stephen Boyd and Lieven Vandenberghe.
\newblock \emph{Convex optimization}.
\newblock Cambridge university press, 2004.

\bibitem[Byrd et~al.(2011)Byrd, Chin, Neveitt, and Nocedal]{byrd2011use}
Richard~H Byrd, Gillian~M Chin, Will Neveitt, and Jorge Nocedal.
\newblock On the use of stochastic {Hessian} information in optimization
  methods for machine learning.
\newblock \emph{SIAM Journal on Optimization}, 21\penalty0 (3):\penalty0
  977--995, 2011.

\bibitem[Byrd et~al.(2012)Byrd, Chin, Nocedal, and Wu]{byrd2012sample}
Richard~H Byrd, Gillian~M Chin, Jorge Nocedal, and Yuchen Wu.
\newblock Sample size selection in optimization methods for machine learning.
\newblock \emph{Mathematical programming}, 134\penalty0 (1):\penalty0 127--155,
  2012.

\bibitem[Carlen(2010)]{carlen2010trace}
Eric Carlen.
\newblock Trace inequalities and quantum entropy: an introductory course.
\newblock \emph{Entropy and the quantum}, 529:\penalty0 73--140, 2010.

\bibitem[Cevher and V{\~u}(2019)]{cevher2018linear}
Volkan Cevher and Bang~C{\^o}ng V{\~u}.
\newblock On the linear convergence of the stochastic gradient method with
  constant step-size.
\newblock \emph{Optimization Letters}, 13\penalty0 (5):\penalty0 1177--1187,
  2019.

\bibitem[Chang and Lin(2011)]{chang2011libsvm}
Chih-Chung Chang and Chih-Jen Lin.
\newblock {LIBSVM: A library for support vector machines}.
\newblock \emph{ACM transactions on intelligent systems and technology (TIST)},
  2\penalty0 (3):\penalty0 27, 2011.

\bibitem[De et~al.(2016)De, Yadav, Jacobs, and Goldstein]{de2016big}
Soham De, Abhay Yadav, David Jacobs, and Tom Goldstein.
\newblock {Big batch SGD: Automated inference using adaptive batch sizes}.
\newblock \emph{arXiv preprint arXiv:1610.05792}, 2016.

\bibitem[Defazio et~al.(2014)Defazio, Bach, and
  Lacoste-Julien]{defazio2014saga}
Aaron Defazio, Francis Bach, and Simon Lacoste-Julien.
\newblock {SAGA: A fast incremental gradient method with support for
  non-strongly convex composite objectives}.
\newblock In \emph{NIPS}, 2014.

\bibitem[Dembo et~al.(1982)Dembo, Eisenstat, and Steihaug]{dembo1982inexact}
Ron~S Dembo, Stanley~C Eisenstat, and Trond Steihaug.
\newblock Inexact {newton} methods.
\newblock \emph{SIAM Journal on Numerical analysis}, 19:\penalty0 400--408,
  1982.

\bibitem[Dennis and Mor{\'e}(1974)]{dennis1974characterization}
John~E Dennis and Jorge~J Mor{\'e}.
\newblock A characterization of superlinear convergence and its application to
  quasi-{Newton} methods.
\newblock \emph{Mathematics of computation}, 28\penalty0 (126):\penalty0
  549--560, 1974.

\bibitem[Duchi et~al.(2011)Duchi, Hazan, and Singer]{duchi2011adaptive}
John Duchi, Elad Hazan, and Yoram Singer.
\newblock Adaptive subgradient methods for online learning and stochastic
  optimization.
\newblock \emph{Journal of Machine Learning Research}, 12\penalty0
  (Jul):\penalty0 2121--2159, 2011.

\bibitem[Eisen et~al.(2018)Eisen, Mokhtari, and Ribeiro]{eisen2017large}
Mark Eisen, Aryan Mokhtari, and Alejandro Ribeiro.
\newblock {Large Scale Empirical Risk Minimization via Truncated Adaptive
  Newton Method}.
\newblock In \emph{{AISTATS}}, 2018.

\bibitem[Erdogdu and Montanari(2015)]{erdogdu2015convergence}
Murat~A. Erdogdu and Andrea Montanari.
\newblock {Convergence rates of sub-sampled Newton methods}.
\newblock In \emph{NIPS}, 2015.

\bibitem[Friedlander and Schmidt(2012)]{friedlander2012hybrid}
Michael~P Friedlander and Mark Schmidt.
\newblock Hybrid deterministic-stochastic methods for data fitting.
\newblock \emph{SIAM Journal on Scientific Computing}, 34\penalty0
  (3):\penalty0 A1380--A1405, 2012.

\bibitem[Gao and Goldfarb(2019)]{gao2019quasi}
Wenbo Gao and Donald Goldfarb.
\newblock {Quasi-Newton methods: superlinear convergence without line searches
  for self-concordant functions}.
\newblock \emph{Optimization Methods and Software}, 34\penalty0 (1):\penalty0
  194--217, 2019.

\bibitem[Gower et~al.(2018)Gower, Hanzely, Richt{\'a}rik, and
  Stich]{gower2018accelerated}
Robert Gower, Filip Hanzely, Peter Richt{\'a}rik, and Sebastian~U Stich.
\newblock Accelerated stochastic matrix inversion: general theory and speeding
  up {BFGS} rules for faster second-order optimization.
\newblock In \emph{NeurIPS}, 2018.

\bibitem[Harikandeh et~al.(2015)Harikandeh, Ahmed, Virani, Schmidt,
  Kone{\v{c}}n{\`y}, and Sallinen]{harikandeh2015stopwasting}
Reza Harikandeh, Mohamed~Osama Ahmed, Alim Virani, Mark Schmidt, Jakub
  Kone{\v{c}}n{\`y}, and Scott Sallinen.
\newblock {Stop wasting my gradients: Practical SVRG}.
\newblock In \emph{NIPS}, 2015.

\bibitem[Hestenes and Stiefel(1952)]{hestenes1952methods}
Magnus~Rudolph Hestenes and Eduard Stiefel.
\newblock \emph{Methods of conjugate gradients for solving linear systems}.
\newblock NBS Washington, DC, 1952.

\bibitem[Johnson and Zhang(2013)]{johnson2013accelerating}
Rie Johnson and Tong Zhang.
\newblock Accelerating stochastic gradient descent using predictive variance
  reduction.
\newblock In \emph{NIPS}, 2013.

\bibitem[Karimi et~al.(2016)Karimi, Nutini, and Schmidt]{karimi2016linear}
Hamed Karimi, Julie Nutini, and Mark Schmidt.
\newblock Linear convergence of gradient and proximal-gradient methods under
  the {Polyak-{\L}ojasiewicz} condition.
\newblock In \emph{ECML}, 2016.

\bibitem[Karimireddy et~al.(2018)Karimireddy, Stich, and
  Jaggi]{karimireddy2018global}
Sai~Praneeth Karimireddy, Sebastian~U Stich, and Martin Jaggi.
\newblock Global linear convergence of {Newton}'s method without
  strong-convexity or {Lipschitz} gradients.
\newblock \emph{arXiv preprint arXiv:1806.00413}, 2018.

\bibitem[Kelley(2002)]{kelley2002brief}
Carl~Tim Kelley.
\newblock A brief introduction to implicit filtering.
\newblock Technical Report CRSC-TR02-28, North Carolina State University.
  Center for Research in Scientific Computation, 2002.
\newblock URL \url{https://repository.lib.ncsu.edu/handle/1840.4/550}.

\bibitem[Kingma and Ba(2015)]{kingma2014adam}
Diederik~P. Kingma and Jimmy Ba.
\newblock {Adam: {A} Method for Stochastic Optimization}.
\newblock In \emph{{ICLR}}, 2015.

\bibitem[Kolte et~al.(2015)Kolte, Erdogdu, and Ozgur]{kolte2015accelerating}
Ritesh Kolte, Murat Erdogdu, and Ayfer Ozgur.
\newblock {Accelerating SVRG via second-order information}.
\newblock In \emph{NIPS Workshop on Optimization for Machine Learning}, 2015.

\bibitem[Levenberg(1944)]{levenberg1944method}
Kenneth Levenberg.
\newblock A method for the solution of certain non-linear problems in least
  squares.
\newblock \emph{Quarterly of applied mathematics}, 2\penalty0 (2):\penalty0
  164--168, 1944.

\bibitem[Li et~al.(2019)Li, Wang, and Zhang]{li2019subsampled}
Xiang Li, Shusen Wang, and Zhihua Zhang.
\newblock Do subsampled newton methods work for high-dimensional data?
\newblock \emph{arXiv preprint arXiv:1902.04952}, 2019.

\bibitem[Liang and Rakhlin(2018)]{liang2018just}
Tengyuan Liang and Alexander Rakhlin.
\newblock {Just interpolate: Kernel" ridgeless" regression can generalize}.
\newblock \emph{arXiv preprint arXiv:1808.00387}, 2018.

\bibitem[Liu and Belkin(2016)]{liu2018mass}
Chaoyue Liu and Mikhail Belkin.
\newblock {Clustering with Bregman Divergences: an Asymptotic Analysis}.
\newblock In \emph{NIPS}, 2016.

\bibitem[Liu and Nocedal(1989)]{liu1989limited}
Dong~C Liu and Jorge Nocedal.
\newblock {On the limited memory BFGS method for large scale optimization}.
\newblock \emph{Mathematical programming}, 45\penalty0 (1-3):\penalty0
  503--528, 1989.

\bibitem[Liu et~al.(2018)Liu, Rong, Tak{\'a}c, and Huang]{liu2018acceleration}
Jie Liu, Yu~Rong, Martin Tak{\'a}c, and Junzhou Huang.
\newblock {On the acceleration of l-bfgs with second-order information and
  stochastic batches}.
\newblock \emph{arXiv preprint arXiv:1807.05328}, 2018.

\bibitem[Lohr(2019)]{lohr2019sampling}
Sharon~L Lohr.
\newblock \emph{{Sampling: Design and Analysis: Design and Analysis}}.
\newblock Chapman and Hall/CRC, 2019.

\bibitem[Lucchi et~al.(2015)Lucchi, McWilliams, and
  Hofmann]{lucchi2015variance}
Aurelien Lucchi, Brian McWilliams, and Thomas Hofmann.
\newblock A variance reduced stochastic {N}ewton method.
\newblock \emph{arXiv preprint arXiv:1503.08316}, 2015.

\bibitem[Ma et~al.(2018)Ma, Bassily, and Belkin]{ma2018power}
Siyuan Ma, Raef Bassily, and Mikhail Belkin.
\newblock {The Power of Interpolation: Understanding the Effectiveness of {SGD}
  in Modern Over-parametrized Learning}.
\newblock In \emph{{ICML}}, 2018.

\bibitem[Marquardt(1963)]{marquardt1963algorithm}
Donald~W Marquardt.
\newblock An algorithm for least-squares estimation of nonlinear parameters.
\newblock \emph{Journal of the society for Industrial and Applied Mathematics},
  11\penalty0 (2):\penalty0 431--441, 1963.

\bibitem[Marteau-Ferey et~al.(2019)Marteau-Ferey, Bach, and
  Rudi]{marteau2019globally}
Ulysse Marteau-Ferey, Francis Bach, and Alessandro Rudi.
\newblock {Globally Convergent Newton Methods for Ill-conditioned Generalized
  Self-concordant Losses}.
\newblock In \emph{NeurIPS}, 2019.

\bibitem[Marteau{-}Ferey et~al.(2019)Marteau{-}Ferey, Ostrovskii, Bach, and
  Rudi]{marteau2019beyond}
Ulysse Marteau{-}Ferey, Dmitrii Ostrovskii, Francis Bach, and Alessandro Rudi.
\newblock {Beyond Least-Squares: Fast Rates for Regularized Empirical Risk
  Minimization through Self-Concordance}.
\newblock In \emph{COLT}, 2019.

\bibitem[Martens(2010)]{martens2010deep}
James Martens.
\newblock Deep learning via {Hessian-free} optimization.
\newblock In \emph{ICML}, 2010.

\bibitem[Martens and Grosse(2015)]{martens2015optimizing}
James Martens and Roger Grosse.
\newblock {Optimizing neural networks with Kronecker-factored approximate
  curvature}.
\newblock In \emph{ICML}, 2015.

\bibitem[Milzarek et~al.(2019)Milzarek, Xiao, Cen, Wen, and
  Ulbrich]{milzarek2018stochastic}
Andre Milzarek, Xiantao Xiao, Shicong Cen, Zaiwen Wen, and Michael Ulbrich.
\newblock {A stochastic semismooth Newton method for nonsmooth nonconvex
  optimization}.
\newblock \emph{SIAM Journal on Optimization}, 29\penalty0 (4):\penalty0
  2916--2948, 2019.

\bibitem[Mokhtari et~al.(2016)Mokhtari, Daneshmand, Lucchi, Hofmann, and
  Ribeiro]{mokhtari2016adaptive}
Aryan Mokhtari, Hadi Daneshmand, Aur{\'{e}}lien Lucchi, Thomas Hofmann, and
  Alejandro Ribeiro.
\newblock {Adaptive Newton Method for Empirical Risk Minimization to
  Statistical Accuracy}.
\newblock In \emph{NIPS}, 2016.

\bibitem[Mokhtari et~al.(2018)Mokhtari, Eisen, and Ribeiro]{mokhtari2018iqn}
Aryan Mokhtari, Mark Eisen, and Alejandro Ribeiro.
\newblock {IQN: An incremental quasi-Newton method with local superlinear
  convergence rate}.
\newblock \emph{SIAM Journal on Optimization}, 28\penalty0 (2):\penalty0
  1670--1698, 2018.

\bibitem[Moritz et~al.(2016)Moritz, Nishihara, and Jordan]{moritz2016linearly}
Philipp Moritz, Robert Nishihara, and Michael Jordan.
\newblock {A linearly-convergent stochastic L-BFGS algorithm}.
\newblock In \emph{Artificial Intelligence and Statistics}, pages 249--258,
  2016.

\bibitem[Nesterov(2018)]{nesterov2018lectures}
Yurii Nesterov.
\newblock \emph{Lectures on convex optimization}, volume 137.
\newblock Springer, 2018.

\bibitem[Nesterov and Nemirovskii(1994)]{nesterov1994interior}
Yurii Nesterov and Arkadii Nemirovskii.
\newblock \emph{Interior-point polynomial algorithms in convex programming},
  volume~13.
\newblock Siam, 1994.

\bibitem[Nocedal and Wright(2006)]{nocedal2006numerical}
Jorge Nocedal and Stephen Wright.
\newblock \emph{Numerical optimization}.
\newblock Springer Science \& Business Media, 2006.

\bibitem[Ortega and Rheinboldt(1970)]{ortega1970iterative}
James~M Ortega and Werner~C Rheinboldt.
\newblock \emph{Iterative solution of nonlinear equations in several
  variables}, volume~30.
\newblock Siam, 1970.

\bibitem[Pascanu and Bengio(2014)]{pascanu2013revisiting}
Razvan Pascanu and Yoshua Bengio.
\newblock Revisiting natural gradient for deep networks.
\newblock In \emph{ICLR}, 2014.

\bibitem[Paszke et~al.(2017)Paszke, Gross, Chintala, Chanan, Yang, DeVito, Lin,
  Desmaison, Antiga, and Lerer]{paszke2017automatic}
Adam Paszke, Sam Gross, Soumith Chintala, Gregory Chanan, Edward Yang, Zachary
  DeVito, Zeming Lin, Alban Desmaison, Luca Antiga, and Adam Lerer.
\newblock {Automatic differentiation in PyTorch}.
\newblock In \emph{NIPS}, 2017.

\bibitem[Patel(2016)]{patel2016kalman}
Vivak Patel.
\newblock Kalman-based stochastic gradient method with stop condition and
  insensitivity to conditioning.
\newblock \emph{SIAM Journal on Optimization}, 26\penalty0 (4):\penalty0
  2620--2648, 2016.

\bibitem[Pearlmutter(1994)]{pearlmutter1994fast}
Barak~A. Pearlmutter.
\newblock {Fast Exact Multiplication by the Hessian}.
\newblock \emph{Neural Computation}, 6\penalty0 (1):\penalty0 147--160, 1994.

\bibitem[Pilanci and Wainwright(2017)]{pilanci2017newton}
Mert Pilanci and Martin~J Wainwright.
\newblock {Newton sketch: A near linear-time optimization algorithm with
  linear-quadratic convergence}.
\newblock \emph{SIAM Journal on Optimization}, 27\penalty0 (1):\penalty0
  205--245, 2017.

\bibitem[Polyak(1964)]{polyak1964some}
Boris~T Polyak.
\newblock Some methods of speeding up the convergence of iteration methods.
\newblock \emph{USSR Computational Mathematics and Mathematical Physics},
  4\penalty0 (5):\penalty0 1--17, 1964.

\bibitem[Polyak(1963)]{polyak1963gradient}
Boris~Teodorovich Polyak.
\newblock Gradient methods for minimizing functionals.
\newblock \emph{Zhurnal Vychislitel'noi Matematiki i Matematicheskoi Fiziki},
  1963.

\bibitem[Roosta-Khorasani and Mahoney(2016{\natexlab{a}})]{roosta2016suba}
Farbod Roosta-Khorasani and Michael~W Mahoney.
\newblock Sub-sampled newton methods i: globally convergent algorithms.
\newblock \emph{arXiv preprint arXiv:1601.04737}, 2016{\natexlab{a}}.

\bibitem[Roosta-Khorasani and Mahoney(2016{\natexlab{b}})]{roosta2016subb}
Farbod Roosta-Khorasani and Michael~W Mahoney.
\newblock Sub-sampled newton methods ii: Local convergence rates.
\newblock \emph{arXiv preprint arXiv:1601.04738}, 2016{\natexlab{b}}.

\bibitem[Roux et~al.(2008)Roux, Manzagol, and Bengio]{roux2008topmoumoute}
Nicolas~L Roux, Pierre-Antoine Manzagol, and Yoshua Bengio.
\newblock Topmoumoute online natural gradient algorithm.
\newblock In \emph{NIPS}, 2008.

\bibitem[Schapire et~al.(1998)Schapire, Freund, Bartlett, Lee,
  et~al.]{schapire1998boosting}
Robert~E Schapire, Yoav Freund, Peter Bartlett, Wee~Sun Lee, et~al.
\newblock {Boosting the margin: A new explanation for the effectiveness of
  voting methods}.
\newblock \emph{The annals of statistics}, 26\penalty0 (5):\penalty0
  1651--1686, 1998.

\bibitem[Schmidt and Roux(2013)]{schmidt2013fast}
Mark Schmidt and Nicolas~Le Roux.
\newblock Fast convergence of stochastic gradient descent under a strong growth
  condition.
\newblock \emph{arXiv preprint arXiv:1308.6370}, 2013.

\bibitem[Schmidt et~al.(2017)Schmidt, Le~Roux, and Bach]{schmidt2017minimizing}
Mark Schmidt, Nicolas Le~Roux, and Francis Bach.
\newblock Minimizing finite sums with the stochastic average gradient.
\newblock \emph{Mathematical Programming}, 2017.

\bibitem[Schraudolph et~al.(2007)Schraudolph, Yu, and
  G{\"u}nter]{schraudolph2007stochastic}
Nicol~N Schraudolph, Jin Yu, and Simon G{\"u}nter.
\newblock A stochastic quasi-{Newton} method for online convex optimization.
\newblock In \emph{AISTATS}, 2007.

\bibitem[Sunehag et~al.(2009)Sunehag, Trumpf, Vishwanathan, and
  Schraudolph]{sunehag2009variable}
Peter Sunehag, Jochen Trumpf, SVN Vishwanathan, and Nicol Schraudolph.
\newblock Variable metric stochastic approximation theory.
\newblock In \emph{AISTATS}, 2009.

\bibitem[Tieleman and Hinton(2012)]{tieleman2012lecture}
Tijmen Tieleman and Geoffrey Hinton.
\newblock {Lecture 6.5-RMSProp: Divide the gradient by a running average of its
  recent magnitude}.
\newblock \emph{COURSERA: Neural networks for machine learning}, 2012.

\bibitem[Tropp et~al.(2015)]{tropp2015introduction}
Joel~A Tropp et~al.
\newblock An introduction to matrix concentration inequalities.
\newblock \emph{Foundations and Trends{\textregistered} in Machine Learning},
  2015.

\bibitem[Vaswani et~al.(2019{\natexlab{a}})Vaswani, Bach, and
  Schmidt]{vaswani2018fast}
Sharan Vaswani, Francis Bach, and Mark Schmidt.
\newblock {Fast and Faster Convergence of {SGD} for Over-Parameterized Models
  and an Accelerated Perceptron}.
\newblock In \emph{{AISTATS}}, 2019{\natexlab{a}}.

\bibitem[Vaswani et~al.(2019{\natexlab{b}})Vaswani, Mishkin, Laradji, Schmidt,
  Gidel, and Lacoste-Julien]{vaswani2019painless}
Sharan Vaswani, Aaron Mishkin, Issam Laradji, Mark Schmidt, Gauthier Gidel, and
  Simon Lacoste-Julien.
\newblock {Painless stochastic gradient: Interpolation, line-search, and
  convergence rates}.
\newblock In \emph{NeurIPS}, 2019{\natexlab{b}}.

\bibitem[Werbos(1988)]{werbos1988backpropagation}
PJ~Werbos.
\newblock Backpropagation: Past and future.
\newblock In \emph{Proceedings of the Second International Conference on Neural
  Network}. IEEE, 1988.

\bibitem[Xu et~al.(2016)Xu, Yang, Roosta-Khorasani, R{\'e}, and
  Mahoney]{xu2016sub}
Peng Xu, Jiyan Yang, Farbod Roosta-Khorasani, Christopher R{\'e}, and Michael~W
  Mahoney.
\newblock Sub-sampled {Newton} methods with non-uniform sampling.
\newblock In \emph{NIPS}, 2016.

\bibitem[Ye et~al.(2017)Ye, Luo, and Zhang]{ye2017approximate}
Haishan Ye, Luo Luo, and Zhihua Zhang.
\newblock {Approximate Newton Methods and Their Local Convergence}.
\newblock In \emph{ICML}, 2017.

\bibitem[Zhang et~al.(2017)Zhang, Bengio, Hardt, Recht, and
  Vinyals]{zhang2016understanding}
Chiyuan Zhang, Samy Bengio, Moritz Hardt, Benjamin Recht, and Oriol Vinyals.
\newblock Understanding deep learning requires rethinking generalization.
\newblock In \emph{ICLR}, 2017.

\bibitem[Zhang and Lin(2015)]{zhang2015disco}
Yuchen Zhang and Xiao Lin.
\newblock Disco: Distributed optimization for self-concordant empirical loss.
\newblock In \emph{ICML}, 2015.

\bibitem[Zhou et~al.(2017)Zhou, Gao, and Goldfarb]{zhou2017stochastic}
Chaoxu Zhou, Wenbo Gao, and Donald Goldfarb.
\newblock {Stochastic Adaptive Quasi-Newton Methods for Minimizing Expected
  Values}.
\newblock In \emph{ICML}, 2017.

\end{thebibliography}

\clearpage
\onecolumn
\appendix
\section{Proof of Theorem~\ref{thm:globalinexactnewton}}
\label{app:ssn-global}
\begin{proof}
This analysis closely follows the proof of Theorem 2.2 in \citet{bollapragada2018exact}. By the $L$-smoothness assumption,
\begin{align*}
& f(\xkk) \leq f(\xk) + \inner{\gradf{\xk}}{\xkk-\xk}+\frac{L}{2}\normsq{\xkk-\xk} \\
& = f(\xk)-\etak\inner{\gradf{\xk}}{\left[\shessf{\Sk}{\xk}\right]^{-1}\sgradf{\Gk}{\xk}}+\frac{L}{2}\etak^2\normsq{\left[\shessf{\Sk}{\xk}\right]^{-1}\sgradf{\Gk}{\xk}}. \tag{Update step} \\
\intertext{Since $\Sk$ and $\Gk$ are independent samples, we can fix the Hessian sample $\Sk$ and take expectation with respect to the unbiased gradient sample $\Gk$,}
& \E_{\Gk} [f(\xkk)] \leq f(\xk)-\etak\inner{\gradf{\xk}}{\left[\shessf{\Sk}{\xk}\right]^{-1}\gradf{\xk}}+\frac{L}{2}\etak^2\,\underbrace{\E_{\Gk}\normsq{\left[\shessf{\Sk}{\xk}\right]^{-1}\sgradf{\Gk}{\xk}}}_{\defeq\,P}.
\end{align*}
Using the fact $\E\normsq{x} = \E\normsq{x-\E x}+\normsq{\E x}$ with $x = \left[\shessf{\Sk}{\xk}\right]^{-1}\sgradf{\Gk}{\xk}$, we can bound the last term as
\begin{align*}
P & = \E_{\Gk} \left[\normsq{\left[\shessf{\Sk}{\xk}\right]^{-1}\sgradf{\Gk}{\xk} - \E_{\Gk}\left[\left[\shessf{\Sk}{\xk}\right]^{-1}\sgradf{\Gk}{\xk}\right]} \right]  + \normsq{\E_{\Gk}[\left[\shessf{\Sk}{\xk}\right]^{-1}\sgradf{\Gk}{\xk}]} \\
& = \E_{\Gk}\left[\normsq{\left[\shessf{\Sk}{\xk}\right]^{-1} \left[\sgradf{\Gk}{\xk} - \gradf{\xk} \right]}\right] + \normsq{\left[\shessf{\Sk}{\xk}\right]^{-1}\gradf{\xk}}. \tag{Again, by unbiasedness and independent batches} \\
\implies P &\leq \frac{1}{\left(\mu_{\Sk}+\tau\right)^2}\E_{\Gk} \left[\normsq{\sgradf{\Gk}{\xk} - \gradf{\xk}}\right] + \normsq{\left[\shessf{\Sk}{\xk}\right]^{-1}\gradf{\xk}}. \tag{Since $\shessf{\Sk}{\xk} \succeq ( \mu_{\Sk}+\tau) I_d$} \\
\intertext{Now we use Lemma~\ref{lemma:grad-batch-sgc} in Appendix~\ref{app:common-lemmas} to obtain }
& \E_{\Gk}\normsq{\sgradf{\Gk}{\x_k}-\gradf{\x_k}} \leq \frac{n - b_{g_k}}{(n-1) \, b_{g_k}}(\rho-1)\normsq{\gradf{\x_k}},
\end{align*}
which implies
\begin{equation*}
P \leq \frac{\rho-1}{\left(\mu_{\Sk}+\tau\right)^2} \frac{n - b_{g_k}}{(n-1) \, b_{g_k}}\normsq{\gradf{\x_k}} + \normsq{\left[\shessf{\Sk}{\xk}\right]^{-1}\gradf{\xk}}. 
\end{equation*}
From the above relations, we have
\begin{align*}
\E_{\Gk}[f(\xkk)] & \leq f(\xk) +  \frac{L\etak^2}{2} \frac{\rho-1}{\left(\mu_{\Sk}+\tau\right)^2} \frac{n - b_{g_k}}{(n-1) \, b_{g_k}}\normsq{\gradf{\x_k}} \\ 
& \phantom{aaaaaaaa}-\underbrace{\etak \inner{\gradf{\xk}}{\left[\shessf{\Sk}{\xk}\right]^{-1}\gradf{\xk}} +  \frac{L\etak^2}{2}\normsq{\left[\shessf{\Sk}{\xk}\right]^{-1}\gradf{\xk}}}_{\defeq\, Q} .
\end{align*}
Expanding $\normsq{\left[\shessf{\Sk}{\xk}\right]^{-1}\gradf{\xk}}$ and decomposing $\left[\shessf{\Sk}{\xk}\right]^{-1}$ gives us
\begin{align*}
Q& = -\etak\inner{\gradf{\xk}}{\left(\left[\shessf{\Sk}{\xk}\right]^{-1}-\frac{L\etak}{2}\left[\shessf{\Sk}{\xk}\right]^{-2}\right)\gradf{\xk}} \\
& = -\etak\inner{\left[\shessf{\Sk}{\xk}\right]^{-1/2}\gradf{\xk}}{\left(I-\frac{L\etak}{2}\left[\shessf{\Sk}{\xk}\right]^{-1}\right)\left[\shessf{\Sk}{\xk}\right]^{-1/2}\gradf{\xk}} \\
&\leq -\etak\left(1-\frac{L\etak}{2\left(\mu_{\Sk}+\tau\right)}\right)\normsq{\left[\shessf{\Sk}{\xk}\right]^{-1/2}\gradf{\xk}}\\
&\leq -\frac{\etak}{\left(L_{\Sk}+\tau\right)}\left(1-\frac{L\etak}{2\left(\mu_{\Sk}+\tau\right)}\right)\normsq{\gradf{\xk}},
\end{align*}
where the second last inequality requires $\left(I-\frac{L\etak}{2}\left[\shessf{\Sk}{\xk}\right]^{-1}\right)$ to be positive definite, which is true for step-sizes satisfying
\begin{equation}
\label{eq:eta_constraint}
\etak\leq\frac{2\left(\mu_{\Sk}+\tau\right)}{L}.
\end{equation}
Then we have
\begin{align*}
\E_{\Gk}[f(\xkk)] &\leq f(\xk) - \frac{\etak}{\left(L_{\Sk}+\tau\right)} \left(1-\frac{L\etak}{2\left(\mu_{\Sk}+\tau\right)}\right)\normsq{\gradf{\xk}} + \frac{L\etak^2}{2} \frac{\rho-1}{\left(\mu_{\Sk}+\tau\right)^2} \frac{n - b_{g_k}}{(n-1) \, b_{g_k}}\normsq{\gradf{\x_k}} \\
&= f(\xk) - \left[\frac{\etak}{\left(L_{\Sk}+\tau\right)} \left(1-\frac{L\etak}{2\left(\mu_{\Sk}+\tau\right)}\right)-\frac{L\etak^2}{2} \frac{\rho-1}{\left(\mu_{\Sk}+\tau\right)^2} \frac{n - b_{g_k}}{(n-1) \, b_{g_k}}\right]\normsq{\gradf{\xk}}.
\end{align*}
Subtract $f(\xopt)$ from both sides and using the fact that strong convexity implies $\normsq{\gradf{x}}\geq2\mu(f(x)-f(x^*))$ for all $x$, the above bound becomes
\begin{align*}
\E_{\Gk}[f(\xkk)] - f(\xopt) &\leq f(\xk) - f(\xopt) \\
& \phantom{aaaa}-  2\mu \left[\frac{\etak}{\left(L_{\Sk}+\tau\right)}\left(1-\frac{L\etak}{2\left(\mu_{\Sk}+\tau\right)}\right) - \frac{L\etak^2}{2} \frac{\rho-1}{\left(\mu_{\Sk}+\tau\right)^2} \frac{n - b_{g_k}}{(n-1) \, b_{g_k}} \right] \left(f(\xk) - f(\xopt) \right)\\
&\leq \left(1-\frac{2\mu\etak}{ \left(L_{\Sk}+\tau\right)}+\frac{\mu L\etak^2}{\left(L_{\Sk}+\tau\right)\left(\mu_{\Sk}+\tau\right)}+\frac{\mu L(\rho-1)\etak^2}{\left(\mu_{\Sk}+\tau\right)^2}\frac{n-b_{g_k}}{(n-1)\,b_{g_k}}\right)(f(\xk)-f(\xopt))
\end{align*}
Define constants $c_1=\frac{\mu}{\left(L_{\Sk}+\tau\right)}$, $c_2=\frac{L}{\left(\mu_{\Sk}+\tau\right)}$, and $c_3=\frac{\mu(\rho-1)}{\left(\mu_{\Sk}+\tau\right)}\frac{n-b_{g_k}}{(n-1)\,b_{g_k}}$ using constant batch-sizes for the subsampled gradient and Hessian, \ie $b_{g_k} = b_{g} \geq 1$ and $b_{s_k} = b_{s} \geq 1$,
\begin{align*}
\implies \E_{\Gk}[f(\xkk)]-f(\xopt) &\leq \left(1-2c_1\etak+c_1c_2\etak^2+c_2c_3\etak^2\right)(f(\xk)-f(\xopt)).
\end{align*}
To ensure contraction, the step size needs to satisfy $\left(1-2c_1\etak+c_1c_2\etak^2+c_2c_3\etak^2\right) \in (0, 1]$, which gives
\begin{equation}
    0<\etak\leq\frac{2c_1}{c_2(c_1+c_3)}
\end{equation}
Taking $\etak\leq\frac{c_1}{c_2(c_1+c_3)}$, the above bound becomes
\begin{align*}
\E_{\Gk}[f(\xkk)]-f(\xopt) & \leq \left(1-\frac{c_1^2}{c_2(c_1+c_3)}\right)(f(\xk)-f(\xopt))\\
& \leq \left(1-\underbrace{\frac{\mu \, \left(\mu_{\Sk}+\tau\right)^2}{L \, \left(L_{\Sk}+\tau\right) \, \left(\mu_{\Sk}+\tau\right) + c_g L \, \left(L_{\Sk}+\tau\right)}}_{\defeq\,C}\right)(f(\xk)-f(\xopt)) 
\end{align*}
where we denote $c_g = \frac{(\rho-1)(n - b_g)}{(n-1)\, b_g}$. Note that since $c_g\geq 0$, our bound for $\etak$ simplifies to 
$$\etak \leq \frac{\left(\mu_{\Sk}+\tau\right)^2}{L\left(\left(\mu_{\Sk}+\tau\right)+\left(L_{\Sk}+\tau\right)c_g\right)} \leq \frac{\left(\mu_{\Sk}+\tau\right)^2}{L\left(\mu_{\Sk}+\tau\right)} \leq \frac{\left(\mu_{\Sk}+\tau\right)}{L},$$
hence satisfies the earlier requirement in (\ref{eq:eta_constraint}). Now we lower bound the term C,
\begin{align*}
C & \geq \frac{\mu \, \left(\mu_{\Sk}+\tau\right)^2}{\left(\Ltilde+\tau\right) \, L \, \left(\mu_{\Sk}+\tau + c_g \right)} \geq \frac{\mu \, \left(\mu_{\Sk}+\tau\right)^2}{2 \max\{c_g, \left(\mu_{\Sk}+\tau\right)\} \,\left(\Ltilde+\tau\right) \, L}, & \tag{Since $\frac{a+b}{2} \leq \max\{a,b\}$.} \\
\implies \E_{\Gk}[f(\xkk)]-f(\xopt) & \leq \left(1-
\frac{\mu \, \left(\mu_{\Sk}+\tau\right)^2}{2 \max\{c_g, \left(\mu_{\Sk}+\tau\right)\} \,\left(\Ltilde+\tau\right) \, L} \right)(f(\xk)-f(\xopt)) \\
\intertext{\textbf{Case 1}: If $c_g \geq \left(\mu_{\Sk}+\tau\right)$, then}
 \E_{\Gk}[f(\xkk)]-f(\xopt) & \leq \left(1-
\frac{\left(\mu_{\Sk}+\tau\right)^2 \, \mu}{2 c_g\,\left(\Ltilde+\tau\right) \, L} \right)(f(\xk)-f(\xopt)) .
\intertext{Taking an expectation w.r.t. $\Sk$ yields
}
\E_{\Sk, \Gk}[f(\xkk)]-f(\xopt) & \leq \E_{\Sk} \left(1-
\frac{\left(\mu_{\Sk}+\tau\right)^2 \, \mu}{2 c_g\,\left(\Ltilde+\tau\right) \, L} \right)(f(\xk)-f(\xopt)) \\
& \leq \left(1-
\frac{\left(\E_{\Sk}[\mu_{\Sk}]+\tau\right)^2 \, \mu}{2 c_g\,\left(\Ltilde+\tau\right) \, L} \right)(f(\xk)-f(\xopt))  \tag{By Jensen's inequality} \\
 & \leq \left(1-
\frac{\left(\bar{\mu}+\tau\right)^2 \, \mu}{2 c_g\,\left(\Ltilde+\tau\right) \, L} \right)(f(\xk)-f(\xopt)). \\
\intertext{\textbf{Case 2}: If $c_g \leq (\mu_S + \tau)$, then}
\E_{\Gk}[f(\xkk)]-f(\xopt) & \leq \left(1-
\frac{\mu \, \left(\mu_{\Sk}+\tau\right)}{2 \left(\Ltilde+\tau\right) \, L} \right)(f(\xk)-f(\xopt)) .
\intertext{Taking an expectation w.r.t. $\batchS_k$ yields}
\E_{\Sk, \Gk}[f(\xkk)]-f(\xopt) & \leq  \left(1- \frac{\mu(\bar{\mu}+\tau)}{2 \left(\Ltilde+\tau\right) \, L} \right)(f(\xk)-f(\xopt)) \\
\intertext{Putting the two cases together, we have}
\E_{\Sk,\Gk}[f(\xkk)] - f(\xopt) & \leq
\max \left\{\left(1- \frac{\left(\bar{\mu}+\tau\right)^2 \, \mu}{2 c_g \,\left(\Ltilde+\tau\right) \, L} \right), \left(1- \frac{\mu(\bar{\mu}+\tau)}{2 \left(\Ltilde+\tau\right) \, L} \right) \right\} (f(\xk)-f(\xopt)).
\intertext{Now we take expectation over all time steps and apply recursion,}
\E[f(\x_T)] - f(\xopt) & \leq
\left(\max \left\{\left(1- \frac{\left(\bar{\mu}+\tau\right)^2 \, \mu}{2 c_g \,\left(\Ltilde+\tau\right)  L} \right), \left(1- \frac{\mu(\bar{\mu}+\tau)}{2 \left(\Ltilde+\tau\right)  L} \right) \right\}\right)^{T} (f(\x_0)-f(\xopt)).
\intertext{Simplifying using the definition for the condition number $\kappa=\frac{L}{\mu}$ gives us}
 \E[f(\x_T)] - f(\xopt) & \leq\left(1-\min\left\{\frac{\left(\bar{\mu}+\tau\right)^2}{2\kappa c_g\left(\Ltilde+\tau\right)}\,,\,\frac{(\bar{\mu}+\tau)}{2\kappa\left(\Ltilde+\tau\right)}\right\}\right)^T(f(\x_0)-f(\xopt))
 \intertext{and the proof is complete.}
\end{align*}
\end{proof}

\section{Proof of Theorem~\ref{thm:inexactnewtonlocal}}
\label{app:ssn-local}
\begin{proof}
From the update rule,
\begin{align*}
\norm{\x_{k+1}-\x^*} &= \norm{\x_k-\x^*-\left[\shessf{\batchS_k}{\x_k}\right]^{-1}\sgradf{\batchG_k}{\x_k}}\\
&=\norm{\left[\shessf{\batchS_k}{\x_k}\right]^{-1}\left(\shessf{\batchS_k}{\x_k}(\x_k-\x^*)-\sgradf{\batchG_k}{\x_k}\right)}\\
&=\norm{\left[\shessf{\batchS_k}{\x_k}\right]^{-1}\left(\shessf{\batchS_k}{\x_k}(\x_k-\x^*)-\gradf{\x_k}-\sgradf{\batchG_k}{\x_k}+\gradf{\x_k}\right)} \\
&\leq\norm{\left[\shessf{\batchS_k}{\x_k}\right]^{-1}} \, \norm{\shessf{\batchS_k}{\x_k}(\x_k-\x^*)-\gradf{\x_k}-\sgradf{\batchG_k}{\x_k}+\gradf{\x_k}}\\
&\leq \frac{1}{\left(\mu_{S_k}+\tau\right)} \norm{\shessf{\batchS_k}{\x_k}(\x_k-\x^*)-\gradf{\x_k}-\sgradf{\batchG_k}{\x_k}+\gradf{\x_k}}\\
& = \frac{1}{\left(\mu_{S_k}+\tau\right)} \big\|\shessf{\batchS_k}{\x_k}(\x_k-\x^*) - \hessf{\x_k}(\x_k-\x^*) + \hessf{\x_k}(\x_k-\x^*)\\
&\phantom{aaaaaaaaaaaaaaaaaaaaaaaaaaaaaa}-\gradf{\x_k} - \sgradf{\batchG_k}{\x_k} + \gradf{\x_k}\big\|
\intertext{where we repeatedly applied the triangle inequality. Then we have}
\norm{\x_{k+1}-\x^*} & \leq \frac{1}{\left(\mu_{S_k}+\tau\right)} \Big[ \norm{\hessf{\x_k}(\x_k-\x^*)-\gradf{\x_k}} 
+ \norm{\left(\shessf{\batchS_k}{\x_k}-\hessf{\x_k}\right)(\x_k-\x^*)} \\
& \phantom{aaaaaaaaaaaaaaaaaaaaaaaaaaaaaaaaaaaaaaa}+ \norm{\sgradf{\batchG_k}{\x_k}-\gradf{\x_k}}\Big].
\end{align*}
Taking the expectation $\mathbb{E}_k$ over all combinations of $\batchS_k$ and $\batchG_k$, we have
\begin{align*}
& \mathbb{E}_k \norm{\x_{k+1}-\x^*} \leq \frac{1}{\left(\mutilde+\tau\right)} \Big[ \underbrace{\norm{\hessf{\x_k}(\x_k-\x^*)-\gradf{\x_k}}}_{\text{(1) Bound using Lipschitz Hessian}}  +  \underbrace{\mathbb{E}_k\norm{\left(\shessf{\batchS_k}{\x_k}-\hessf{\x_k}\right)(\x_k-\x^*)}}_{\text{(2) Bound using Hessian Variance}} \\ &\phantom{aaaaaaaaaaaaaaaaaaaaaaaaaaaaaaaaaaaaaaaaaaaaaaaaaa} + \underbrace{\mathbb{E}_k\norm{\sgradf{\batchG_k}{\x_k}-\gradf{\x_k}}}_{\text{(3) Bound using SGC}} \Big].
\end{align*}
We bound the first term using $M$-Lipschitz continuity of the Hessian, 
\begin{align*}
\norm{\hessf{\x_k}(\x_k-\x^*)-\gradf{\x_k}} & = \norm{\grad{\x_k}-\grad{\x^*} - \hessf{\x_k}(\x_k-\x^*)} \\
& = \norm{\int^1_0\hessf{\x^*+t(\x_k-\x^*)}(\x_k-\x^*)dt
- \hessf{\x_k}(\x_k-\x^*)} \\
& = \norm{\int^1_0\left(\hessf{\x^*+t(\x_k-\x^*}-\hessf{\x_k} \right)(\x_k-\x^*) dt} \\
& \leq \int^1_0 \norm{\left(\hessf{\x^*+t(\x_k-\x^*}-\hessf{\x_k} \right)(\x_k-\x^*) dt} \tag{Jensen's inequality}\\
& \leq  \int_0^1 \norm{\hessf{\x^*+t(\x_k-\x^*)}-\hessf{\x_k}} \, \norm{\x_k-\x^*} dt \tag{Cauchy–Schwarz inequality} \\
& \leq \int_0^1 M \norm{\x^*+t(\x_k-\x^*) - \x_k} \, \norm{\x_k-\x^*} dt \tag{$M$-Lipschitz Hessian}\\
&=\normsq{\x_k-\x^*} \int_0^1 M \, (1-t) \, dt \nonumber\\
\implies \norm{\hessf{\x_k}(\x_k-\x^*)-\gradf{\x_k}} & \leq \frac{M}{2} \normsq{\x_k-\x^*} 
\end{align*}.

To bound the second term, we use
\begin{align*}
\E_k\norm{\shessf{\batchS_k}{\xk}-\hessf{\xk}} &= \E_k\norm{\nabla^2f_{\batchS_k}(\xk)+\tau I -\hessf{\xk}} \\
&\leq \E_k \norm{\nabla^2f_{\batchS_k}(\xk)-\hessf{\xk}} + \norm{\tau I}.
\intertext{By the assumption that the subsampled Hessians have bounded variance, from \citet{harikandeh2015stopwasting, lohr2019sampling} we have that}
\E_k\left[\normsq{\nabla^2f_{\batchS_k}(\xk)-\hessf{\xk}}\right] &\leq \frac{n - b_{s_k}}{n \, b_{s_k}}\sigma_s^2.
\intertext{Using Jensen's inequality for the square root function and combining with the above, we have}
\E_k\norm{\shessf{\batchS_k}{\xk}-\hessf{\xk}} &\leq \sigma_s\sqrt{\frac{n - b_{s_k}}{n \, b_{s_k}}} + \tau.
\end{align*}

Then by setting the sub-sampled Hessian batch-size according to
\begin{align*}
b_{s_k} \geq\frac{n}{\norm{\gradf{\x_k}}\frac{n}{\sigma_s^2}+1},
\end{align*}
we can achieve 
\begin{align*}
\E_k \norm{\shessf{\batchS_k}{\x_k}-\hessf{\x_k}}\leq \norm{\gradf{\x_k}} + \tau.
\end{align*}
The second term can then be bounded as
\begin{align*}
\E_k \norm{\left(\shessf{\batchS_k}{\x_k}-\hessf{\x_k}\right)(\x_k-\x^*)} 
& \leq \norm{\x_k-\x^*} \E_k \norm{\shessf{\batchS_k}{\x_k}-\hessf{\x_k}} \tag{Cauchy-Schwarz}\\
&\leq\norm{\x_k-\x^*} \, \left(\norm{\gradf{\x_k}}+\tau\right) \tag{from above}\\
& \leq L\normsq{\x_k-\x^*} + \tau \norm{\x_k - \x^*} \tag{$L$-smoothness}.
\end{align*}
The third term can again be bounded using Lemma~\ref{lemma:grad-batch-sgc}. Applying Jensen's inequality, this gives us
\begin{align*}
\E_k \norm{\sgradf{\batchG_k}{\x_k}-\gradf{\x_k}} \leq \sqrt{\frac{n - b_{g_k}}{(n-1) \, b_{g_k}}}\sqrt{\rho-1}\norm{\gradf{\x_k}}.
\end{align*}
If we let $b_{g_k} \geq \frac{n}{\left(\frac{n-1}{\rho-1}\right)\normsq{\gradf{\x_k}}+1}
$ then we have 
\begin{align*}
\E_k \norm{\sgradf{\batchG_k}{\x_k}-\gradf{\x_k}} \leq \normsq{\gradf{\x_k}} \leq L^2\normsq{\x_k-\x^*},
\end{align*}
where the last inequality again comes from $L$-smoothness. Putting the above three bounds together gives us
\begin{align*}
\mathbb{E}_k\norm{\x_{k+1}-\x^*} &\leq \frac{1}{(\mutilde+\tau)} \, \left[ \frac{M}{2} \normsq{\x_k-\x^*} + L \normsq{\x_k-\x^*} + \tau\norm{\xk-\xopt} + L^2 \normsq{\x_k-\x^*} \right]\\
& \leq \frac{\left(M + 2 L + 2 L^2 \right)}{2 (\mutilde+\tau)} \, \normsq{\x_k-\x^*} + \frac{\tau}{\mutilde+\tau}\norm{\xk-\xopt}.
\end{align*}

\begin{align*}
\intertext{Using the $\gamma$ bounded moments assumption by taking expectation over all $k$, this implies}
\mathbb{E} \norm{\x_{k+1}-\x^*} &\leq \frac{\gamma \left(M + 2 L + 2 L^2 \right)}{2 (\mutilde+\tau)} \, \left(\E \norm{\x_k-\x^*} \right)^2 + \frac{\tau}{\mutilde+\tau}\E\norm{\xk-\xopt},
\end{align*}
which gives us the linear-quadratic convergence. Moreover, if $\tau=0$ and $\mutilde>0$, we have the following quadratic convergence 
\begin{equation*}
\E\norm{\xkk-\xopt} \leq \frac{\gamma(M+2L+2L^2)}{2\mutilde}\,\left(\E\norm{\xk-\xopt}\right)^2,
\end{equation*}
given that $\norm{\x_0-\x^*}\leq \frac{2\mutilde}{\gamma(M+2L+2L^2)}$. Here, convergence is guaranteed as $\E\norm{\xk-\xopt}\leq \frac{2\mutilde}{\gamma(M+2L+2L^2)}$ for all $k$ by induction using the neighbourhood criterion. 
\end{proof}

\subsection{Proof of Corollary~\ref{corollary:decreasing-tau}}
\label{sec:decreasing-tau}
\begin{proof}
By a similar analysis of Theorem~\ref{thm:inexactnewtonlocal} but replacing $\tau$ with $\tau_k$ and using $\mutilde_k$ instead of $\mutilde$, we arrive at
\begin{align*}
\mathbb{E}_k \norm{\x_{k+1}-\x^*} &\leq \frac{1}{\left(\mutilde_k+\tau_k\right)} \Big[ \underbrace{\norm{\hessf{\x_k}(\x_k-\x^*)-\gradf{\x_k}}}_{\text{(1) Bound using Lipschitz Hessian}}  +  \underbrace{\mathbb{E}_k\norm{\left(\shessf{\batchS_k}{\x_k}-\hessf{\x_k}\right)(\x_k-\x^*)}}_{\text{(2) Bound using Hessian Variance}} \\ &\phantom{aaaaaaaaaaaaaaaaaaaaaaaaaaaaaaaaaaaaaa} + \underbrace{\mathbb{E}_k\norm{\sgradf{\batchG_k}{\x_k}-\gradf{\x_k}}}_{\text{(3) Bound using SGC}} \Big].
\end{align*}
The second term can then be bounded as
\begin{align*}
\E_k \norm{\left(\shessf{\batchS_k}{\x_k}-\hessf{\x_k}\right)(\x_k-\x^*)} 
& \leq \norm{\x_k-\x^*} \E_k \norm{\shessf{\batchS_k}{\x_k}-\hessf{\x_k}}\\
&\leq\norm{\x_k-\x^*} \, \left[\norm{\gradf{\x_k}}+\tau_k\right]\\
& \leq L\normsq{\x_k-\x^*} + \tau_k \norm{\x_k - \x^*},
\intertext{and if we decrease the regularization factor as $\tau_k\leq \norm{\gradf{\xk}}$,}
& \leq L\normsq{\x_k-\x^*} + \norm{\gradf{\xk}}\norm{\xk-\xopt}\\
&\leq 2L\normsq{\xk-\xopt}.
\end{align*}

The other two terms will be bounded similarly as in Theorem~\ref{thm:inexactnewtonlocal}. Putting all three bounds together,
\begin{align*}
\mathbb{E}_k\norm{\x_{k+1}-\x^*} & \leq \frac{1}{(\mutilde_k+\tau_k)} \, \left[ \frac{M}{2} \normsq{\x_k-\x^*} + 2L \normsq{\x_k-\x^*}  + L^2 \normsq{\x_k-\x^*} \right]\\
& \leq \frac{\left(M + 4L + 2 L^2 \right)}{2 (\mutilde_k+\tau_k)} \, \normsq{\x_k-\x^*} .
\end{align*}

\begin{align*}
\intertext{Using the $\gamma$ bounded moments assumption,}
\implies \mathbb{E} \norm{\x_{k+1}-\x^*} &\leq \frac{\gamma \left(M + 4L + 2 L^2 \right)}{2 (\mutilde_k+\tau_k)} \, \left(\E \norm{\x_k-\x^*} \right)^2,
\end{align*}
which will converge in a local neighbourhood $\norm{\xk-\xopt}\leq\frac{2(\mutilde+\min_k\tau_k)}{\gamma(M+4L+2L^2)}$.
\end{proof}

\subsection{Local quadratic convergence under stronger SGC}
\label{app:ssn-local-stronger}
\begin{corollary}
Under the same assumptions (a) - (c) of Theorem~\ref{thm:globalinexactnewton}, along with (d) $\rho$-stronger SGC, (e) $M$-Lipschitz continuity of the Hessian, and (f) $\gamma$-bounded moment of the iterates, the sequence $\{\xk\}_{k\geq 0}$ generated by R-SSN with (\rnum{1}) unit step-size $\eta_k = \eta =1$ and (\rnum{2}) constant batch-size $b_{g_k} = b_g$ for the gradients with a growing batch-size for Hessian such that 
\begin{align*}
\qquad b_{s_k}\geq\frac{n}{\frac{n}{\sigma^2}\norm{\gradf{\x_k}}+1}
\end{align*}
converges to $\xopt$ with the quadratic rate, 
\begin{align*}
\E_k\norm{\x_{k+1}-\x^*}\leq\gamma\left(\frac{M+2L+2L^2 \, c_g}{2\mutilde}\right)\left(\E\norm{\x_k-\x^*}\right)^2,
\end{align*}
from a close enough initialization $\x_0$ such that $\norm{\x_0-\x^*}\leq\frac{2\mutilde}{\gamma \, (M+2L+2L^2 \, c_g)}$. Here $c_g =\sqrt{\frac{\rho(n - b_g)}{b_g \, (n-1)}}$. 
\label{cor:SSN-local-stronger-SGC}
\end{corollary}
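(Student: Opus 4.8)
The plan is to mirror the proof of \cref{thm:inexactnewtonlocal} almost verbatim, changing only how the stochastic-gradient error is controlled. Starting from the update with $\eta=1$ and writing $\xkk-\xopt=[\shessf{\Sk}{\xk}]^{-1}\big(\shessf{\Sk}{\xk}(\xk-\xopt)-\sgradf{\Gk}{\xk}\big)$, I would add and subtract $\gradf{\xk}$ and $\hessf{\xk}(\xk-\xopt)$ inside the norm and apply the triangle inequality to split the error into the same three pieces as in \cref{thm:inexactnewtonlocal}: a Lipschitz-Hessian term $\norm{\hessf{\xk}(\xk-\xopt)-\gradf{\xk}}$, a Hessian-variance term $\norm{(\shessf{\Sk}{\xk}-\hessf{\xk})(\xk-\xopt)}$, and a gradient-noise term $\norm{\sgradf{\Gk}{\xk}-\gradf{\xk}}$, all prefactored by $1/(\mutilde+\tau)$ after lower-bounding the smallest eigenvalue of the regularized subsampled Hessian.

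The first two terms are handled exactly as before, so I can quote those bounds directly. The Lipschitz-Hessian term gives $\frac{M}{2}\normsq{\xk-\xopt}$ via the integral-remainder argument. For the Hessian-variance term, the growing Hessian batch $b_{s_k}\geq n/(\frac{n}{\sigma^2}\norm{\gradf{\xk}}+1)$ forces $\E_k\norm{\shessf{\Sk}{\xk}-\hessf{\xk}}\leq\norm{\gradf{\xk}}+\tau$, and Cauchy--Schwarz plus $L$-smoothness yields $L\normsq{\xk-\xopt}+\tau\norm{\xk-\xopt}$; taking $\tau=0$, consistent with the stated rate, eliminates the linear piece.

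The only genuinely new step is the gradient term under the stronger SGC $\E_i\normsq{\sgradf{i}{\x}}\leq\rho\norm{\gradf{\x}}^4$. I would first re-derive the analogue of \cref{lemma:grad-batch-sgc}: the sampling-without-replacement variance identity gives $\E_{\Gk}\normsq{\sgradf{\Gk}{\xk}-\gradf{\xk}}=\frac{n-b_g}{(n-1)b_g}\cdot\frac{1}{n}\sum_i\normsq{\sgradf{i}{\xk}-\gradf{\xk}}$, and since $\frac{1}{n}\sum_i\normsq{\sgradf{i}{\xk}-\gradf{\xk}}=\frac{1}{n}\sum_i\normsq{\sgradf{i}{\xk}}-\normsq{\gradf{\xk}}\leq\rho\norm{\gradf{\xk}}^4$, the stronger SGC turns this into a $\norm{\gradf{\xk}}^4$ bound. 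Applying Jensen for the square root then gives $\E_{\Gk}\norm{\sgradf{\Gk}{\xk}-\gradf{\xk}}\leq c_g\normsq{\gradf{\xk}}$ with $c_g=\sqrt{\rho(n-b_g)/(b_g(n-1))}$, and $L$-smoothness upgrades this to $c_g L^2\normsq{\xk-\xopt}$. This is the crucial point: the fourth-power growth is exactly what, after the square root, produces a term \emph{quadratic} in $\norm{\xk-\xopt}$ using only a constant gradient batch, whereas the ordinary SGC yields only a linear factor and hence forces a growing batch.

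Collecting the three bounds with $\tau=0$ gives $\E_k\norm{\xkk-\xopt}\leq\frac{M+2L+2L^2 c_g}{2\mutilde}\normsq{\xk-\xopt}$; taking a full expectation and invoking the $\gamma$-bounded-moments assumption $\E\normsq{\xk-\xopt}\leq\gamma(\E\norm{\xk-\xopt})^2$ produces the stated quadratic recursion. The main obstacle I anticipate is the bookkeeping in the new variance lemma---ensuring the stronger SGC is applied to the raw second moment $\frac{1}{n}\sum_i\normsq{\sgradf{i}{\xk}}$ rather than to the centred variance, and that the square root collapses $\norm{\gradf{\xk}}^4$ to $\normsq{\gradf{\xk}}$ with the constant coming out exactly as $c_g$. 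Finally I would close with the standard induction: the neighbourhood condition $\norm{\x_0-\xopt}\leq 2\mutilde/(\gamma(M+2L+2L^2 c_g))$ is precisely the reciprocal of the quadratic constant, so a single step of the recursion keeps the iterate inside the basin, guaranteeing monotone contraction.
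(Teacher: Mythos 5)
Your proposal is correct and follows essentially the same route as the paper's own proof: you reuse the three-term decomposition and the Lipschitz-Hessian and Hessian-variance bounds from Theorem~\ref{thm:inexactnewtonlocal}, and replace the gradient-noise step with the stronger-SGC sampling bound (the paper's Lemma~\ref{lemma:grad-batch-strong-sgc}), which after Jensen's inequality and $L$-smoothness produces exactly the $c_g L^2 \normsq{\xk - \xopt}$ term with $c_g = \sqrt{\rho(n-b_g)/((n-1)b_g)}$. The only cosmetic difference is that you set $\tau = 0$ at the outset, whereas the paper carries general $\tau$ through to a linear-quadratic bound and then specializes; the final recursion, neighbourhood radius, and induction argument are identical.
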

\begin{proof}
Using Lemma~\ref{lemma:grad-batch-strong-sgc} to bound the third term in the analysis of Theorem~\ref{thm:inexactnewtonlocal}, we obtain  
\begin{align*}
\E_{k} \norm{\sgradf{\batchG_k}{\x_k}-\gradf{\x_k}} & \leq \sqrt{\frac{\rho \, (n - b_{g_k})}{(n-1) \, b_{g_k}}}\normsq{\gradf{\x_k}} \\
&\leq L^2\sqrt{\frac{\rho \, (n - b_{g_k})}{(n-1) \, b_{g_k}}}\normsq{\x_k-\x^*}.
\end{align*}
The first two terms are bounded the same way, giving us
\begin{align*}
\mathbb{E}_k\norm{\x_{k+1}-\x^*} &\leq \frac{1}{(\mutilde+\tau)} \, \left[ \frac{M}{2} \normsq{\x_k-\x^*} + L \normsq{\x_k-\x^*} + \tau\norm{\xk-\xopt} + L^2 c_g \, \normsq{\x_k-\x^*} \right] \\
&\phantom{aaaaaaaaaaaaaaaaaaaaaaaaaaaaaaaaaaaaaaaaaaaaaa}.
\end{align*}
where $c_g = \sqrt{\frac{\rho \, (n - b_{g})}{(n-1) \, b_{g}}}$ and $b_{g}$ can remain fixed for all the iterations. 

\begin{align*}
\intertext{Using the $\gamma$ bounded moments assumption,}
\implies \mathbb{E} \norm{\x_{k+1}-\x^*} \leq \frac{\gamma \left(M + 2 L + 2 L^2 c_g \right)}{2 (\mutilde+\tau)} \, \left(\E \norm{\x_k-\x^*} \right)^2 + \frac{\tau}{\mutilde+\tau}\E\norm{\xk-\xopt}
\end{align*}
which gives us the linear-quadratic convergence. 
Furthermore if $\tau = 0$, $\mutilde > 0$ and $\norm{\x_0-\x^*}\leq \frac{2(\mutilde + \tau)}{\gamma \, (M+2L+2L^2 \, c_g)}$ yields the quadratic rate
\begin{equation*}
\E \norm{\x_{k+1}-\x^*} \leq \left(\frac{\gamma \left(M+2L+2L^2 \, c_g \right)}{2(\mutilde + \tau)}\right)\left(\E\norm{\x_k-\x^*}\right)^2.
\end{equation*}
\end{proof}

\section{Proof of Theorem~\ref{thm:full-stochastic-self-concord}}
\label{app:full-stochastic-self-concord}
We define the local norm of a direction $h$ with respect to the local Hessian for a self-concordant function as
\begin{align*}
\norm{h}_x = \inner{\hessf{x}h}{h}^{1/2}=\norm{\left[\hessf{x}\right]^{1/2}h}.
\end{align*}
The following are standard results for self-concordant functions needed in our analysis.

\begin{theorem}[Theorem 5.1.9 in \cite{nesterov2018lectures}]
\label{thm:nesterov-self-concordant-descent}
Let $f$ be a standard self-concordant function, $x,y\in\dom f$, and $\norm{y-x}_x<1$. Then
\begin{equation*}
    f(y)\leq f(x)+\inner{\gradf{x}}{y-x}+\omegastarof{\norm{y-x}_x}.
\end{equation*}
Here, $\omegastarof{t} = -t-\log(1-t)$.
\end{theorem}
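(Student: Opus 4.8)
The plan is to reduce the multivariate statement to a one-dimensional problem along the segment joining $x$ and $y$, and then integrate the self-concordance differential inequality. Set $u=y-x$ and define $\phi(t)=f(x+tu)$ for $t\in[0,1]$, so that $\phi(0)=f(x)$, $\phi(1)=f(y)$, $\phi'(0)=\inner{\gradf{x}}{y-x}$, and $\phi''(t)=u\transpose\hessf{x+tu}\,u=\indnorm{u}{x+tu}^2$, while $\phi'''(t)=u\transpose(\tensf{x+tu}[u])\,u$. Applying the self-concordance definition at the point $x+tu$ in the direction $u$ gives the scalar differential inequality $|\phi'''(t)|\leq 2(\phi''(t))^{3/2}$ for all $t\in[0,1]$. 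Write $r=\indnorm{u}{x}=\norm{y-x}_x$, so that by hypothesis $r<1$.

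The key step is to bound $\phi''(t)$ from above. Assuming $\phi''(t)>0$ (the degenerate case $\phi''(0)=0$ forces $\phi''\equiv 0$ by the cubic bound, so that $r=0$ and the remainder vanishes), I would introduce $\psi(t)=(\phi''(t))^{-1/2}$ and compute $\psi'(t)=-\tfrac{1}{2}(\phi''(t))^{-3/2}\phi'''(t)$. The self-concordance inequality then yields $|\psi'(t)|\leq 1$, so integrating gives the lower bound $\psi(t)\geq\psi(0)-t=\tfrac{1}{r}-t$. For $t\in[0,1]$ this quantity is strictly positive since $1-rt\geq 1-r>0$, and inverting produces $\phi''(t)\leq\dfrac{r^2}{(1-rt)^2}$.

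To conclude, I would apply Taylor's theorem with integral remainder, $\phi(1)=\phi(0)+\phi'(0)+\int_0^1(1-t)\,\phi''(t)\,dt$, and substitute the bound above to obtain
\[
f(y)\leq f(x)+\inner{\gradf{x}}{y-x}+\int_0^1\frac{r^2(1-t)}{(1-rt)^2}\,dt.
\]
The remaining work is the explicit evaluation of this integral. The substitution $s=rt$ turns it into $\int_0^r\frac{r-s}{(1-s)^2}\,ds$, and the decomposition $r-s=(r-1)+(1-s)$ splits the integrand into $\frac{r-1}{(1-s)^2}+\frac{1}{1-s}$, whose antiderivatives are elementary: the first piece contributes $-r$ and the second $-\log(1-r)$, giving exactly $\omegastarof{r}=-r-\log(1-r)$.

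The main obstacle is the second step: extracting the uniform upper bound on $\phi''(t)$ from the cubic self-concordance condition. The reduction to one dimension, Taylor's formula, and the final integral are mechanical, but the trick of passing to $\psi=(\phi'')^{-1/2}$ to linearize the cubic inequality is the crux, and one must verify the positivity $1-rt>0$ on $[0,1]$ that keeps the bound finite and the integral convergent. This positivity is precisely where the hypothesis $\norm{y-x}_x<1$ enters.
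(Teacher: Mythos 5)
Your proof is correct, and it coincides with the source the paper relies on: the paper does not prove this statement itself (it imports it as Theorem~5.1.9 of Nesterov, 2018, deferring the proof to that reference), and your argument is precisely the standard proof given there --- restrict to $\phi(t)=f(x+t(y-x))$, linearize the cubic self-concordance inequality by differentiating $(\phi''(t))^{-1/2}$ to obtain $\phi''(t)\leq \norm{y-x}_x^2/(1-t\norm{y-x}_x)^2$ (with $\norm{y-x}_x<1$ ensuring positivity of the denominator on $[0,1]$), and integrate the Taylor remainder to get $\omegastarof{\norm{y-x}_x}$. Your treatment of the degenerate case $\phi''(0)=0$ and the closed-form evaluation of the integral are both correct, so there is nothing to fix.
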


\begin{theorem}[Theorem 5.2.1 in \cite{nesterov2018lectures}]
\label{thm:nesterov-self-concordant-strong-convexity}
Let $\lambda^0(\x)\defeq \inner{\gradf{\x}}{\left[\hessf{\x}\right]^{-1}\gradf{\x}}^{1/2}$ be the unregularized Newton decrement at $\x$. If $f$ is standard self-concordant and $\lambda^0(\x) < 1$, then
\begin{equation*}
f(\x)-f(\xopt)\leq\omegastarof{\lambda^0(\x)}.
\end{equation*}
\end{theorem}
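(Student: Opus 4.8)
The plan is to obtain the claimed \emph{upper} bound on the optimality gap by deriving a matching \emph{lower} bound on $f(\xopt)=\min_y f(y)$. Note that Theorem~\ref{thm:nesterov-self-concordant-descent} supplies only the upper estimate $f(y)\le f(\x)+\inner{\gradf{\x}}{y-\x}+\omegastarof{\norm{y-\x}_{\x}}$, which bounds $f$ from above and is therefore the wrong-direction tool for lower-bounding $f^{*}$. Instead I would invoke its standard $\omegaof{\cdot}$-companion for self-concordant functions~\cite{nesterov2018lectures}: for every $y\in\dom f$,
\begin{equation*}
f(y)\ge f(\x)+\inner{\gradf{\x}}{y-\x}+\omegaof{\norm{y-\x}_{\x}},\qquad \omegaof{t}=t-\ln(1+t).
\end{equation*}
Minimizing both sides over $y$ then gives $f(\xopt)\ge f(\x)+\min_y\big[\inner{\gradf{\x}}{y-\x}+\omegaof{\norm{y-\x}_{\x}}\big]$.

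Next I would reduce this to a one-dimensional problem. Writing $r=\norm{y-\x}_{\x}$ and applying Cauchy--Schwarz in the local inner product $\inner{\hessf{\x}\cdot}{\cdot}$, we get $\inner{\gradf{\x}}{y-\x}\ge-\norm{\gradf{\x}}^{*}_{\x}\,\norm{y-\x}_{\x}=-\lambda^{0}(\x)\,r$, with equality attained by taking $y-\x$ proportional to the Newton direction $-\invhessf{\x}\gradf{\x}$. Here I use that the dual local norm of the gradient is exactly the Newton decrement, $\norm{\gradf{\x}}^{*}_{\x}=\inner{\gradf{\x}}{\invhessf{\x}\gradf{\x}}^{1/2}=\lambda^{0}(\x)$. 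Since $\omegaof{\cdot}$ depends on $y$ only through $r$, the minimization collapses to $\min_{r\ge 0}\,g(r)$ with $g(r)=-\lambda r+\omegaof{r}$ and $\lambda:=\lambda^{0}(\x)$.

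Finally I would solve this scalar problem explicitly. The function $g(r)=-\lambda r+r-\ln(1+r)$ is convex with $g'(r)=(1-\lambda)-\tfrac{1}{1+r}$, so its minimizer is $r^{*}=\tfrac{\lambda}{1-\lambda}$, nonnegative and finite precisely because $\lambda<1$. Substituting yields $g(r^{*})=\lambda+\ln(1-\lambda)=-\omegastarof{\lambda}$, hence $f(\xopt)\ge f(\x)-\omegastarof{\lambda^{0}(\x)}$, which is the claim after rearranging.

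The main obstacle is conceptual rather than computational: the stated tool (Theorem~\ref{thm:nesterov-self-concordant-descent}) points the wrong way, so the crux is recognizing that one must call on the $\omegaof{\cdot}$ lower bound instead of the $\omegastarof{\cdot}$ upper bound. Beyond that, the points to check carefully are that $\lambda^{0}(\x)<1$ keeps $r^{*}$ finite (so that $\omegastarof{\lambda^{0}(\x)}$ is itself finite), the correct identification of the dual local norm of $\gradf{\x}$ with the Newton decrement, and that the minimizing displacement of local norm $r^{*}$ remains inside $\dom f$ — which holds since $r^{*}<\infty$ keeps $y$ within the Dikin ellipsoid on which the self-concordant lower bound is valid.
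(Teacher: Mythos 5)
The paper contains no proof of this statement---it is imported verbatim from Nesterov with the proofs deferred to the reference---and your argument is exactly the standard (Nesterov) one: minimize the $\omegaof{\cdot}$ lower bound over $y$, use the dual-local-norm identity $\norm{\gradf{\x}}^*_{\x}=\lambda^0(\x)$ to collapse to the scalar problem $\min_{r\geq 0}\left[-\lambda r+\omegaof{r}\right]=-\omegastarof{\lambda}$ with minimizer $r^*=\lambda/(1-\lambda)$, finite precisely when $\lambda<1$, so the proposal is correct. One correction to your closing remark: the $\omegaof{\cdot}$ lower bound holds on all of $\dom f$, not merely the Dikin ellipsoid (that restriction $\norm{y-\x}_{\x}<1$ belongs to the $\omegastarof{\cdot}$ \emph{upper} bound of Theorem~\ref{thm:nesterov-self-concordant-descent}), which matters because $r^*\geq 1$ whenever $\lambda\geq 1/2$---and in any case feasibility of the minimizing $y$ is irrelevant, since you only need $\min_{y\in\dom f}$ to dominate the unconstrained scalar minimum over $r\geq 0$.
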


See \cite{nesterov2018lectures} for the proofs. Now we give the proof for Theorem~\ref{thm:full-stochastic-self-concord}.
\begin{proof}
First, we analyze the norm of the update direction with respect to the local Hessian,
\begin{align}
\label{eq:normupdate}
\norm{\xkk-\xk}_{\xk} &=\frac{c\eta}{1+\eta\tilde{\lambda}_k}\norm{\left[\hessf{\xk}\right]^{1/2}\left[\shessf{\batchS_k}{\xk}\right]^{-1}\sgradf{\batchG_k}{\xk}} \nonumber\\
&= \frac{c\eta}{1+\eta\tilde{\lambda}_k}\norm{\left[\hessf{\xk}\right]^{1/2}\left[\shessf{\batchS_k}{\xk}\right]^{-1/2}\left[\shessf{\batchS_k}{\xk}\right]^{-1/2}\sgradf{\batchG_k}{\xk}} \nonumber\\
&\leq \frac{c\eta}{1+\eta\tilde{\lambda}_k}\norm{\left[\hessf{\xk}\right]^{1/2}\left[\shessf{\batchS_k}{\xk}\right]^{-1/2}}\norm{\left[\shessf{\batchS_k}{\xk}\right]^{-1/2}\sgradf{\batchG_k}{\xk}} \nonumber \\
&\leq \frac{c\eta\tilde{\lambda}_k}{1+\eta\tilde{\lambda}_k} \sqrt{\frac{L}{\mutilde+\tau}},  \nonumber \\
\intertext{where the last inequality comes from our definition of the regularized stochastic Newton decrement and regularized subsampled Hessian. Substitute in the choice of $c$,}
\implies \norm{\xkk-\xk}_{\xk} &\leq \frac{\eta\tilde{\lambda}_k}{1+\eta\tilde{\lambda}_k} \leq 1 .
\end{align}
This allows us to analyze the sub-optimality in terms of objective values using Theorem~\ref{thm:nesterov-self-concordant-descent},
\begin{align}
\label{eq:expwrtboth}
f(\xkk) &\leq f(\xk) -\frac{c\eta}{1+\eta\tilde{\lambda}_k}\inner{\gradf{\xk}}{\left[\shessf{\batchS_k}{\xk}\right]^{-1}\sgradf{\batchG_k}{\xk}} + \omegastarof{\norm{\xkk-\xk}_{\xk}}. \nonumber \\
\intertext{We know that $\omega_*$ is strictly increasing on the positive domain, using (\ref{eq:normupdate}) we have,}
\implies f(\xkk) &\leq f(\xk) -\frac{c\eta}{1+\eta\tilde{\lambda}_k}\inner{\gradf{\xk}}{\left[\shessf{\batchS_k}{\xk}\right]^{-1}\sgradf{\batchG_k}{\xk}} + \omegastarof{\frac{\eta\tilde{\lambda}_k}{1+\eta\tilde{\lambda}_k}} \nonumber\\
& \leq f(\xk) -\frac{c\eta}{1+\eta\tilde{\lambda}_k}\inner{\gradf{\xk}}{\left[\shessf{\batchS_k}{\xk}\right]^{-1}\sgradf{\batchG_k}{\xk}} + \omegastarof{\omegaprimeof{\eta\tilde{\lambda}_k}} \nonumber\\
\intertext{since $\omega'(t)=\frac{t}{1+t}$. Now take expectation on both sides with respect to $\batchG_k$ and $\batchS_k$  conditioned on $\xk$,}
\expectationwrtof{\batchG_k,\batchS_k}{f(\xkk)} &\leq f(\xk) - \expectationwrtof{\batchG_k,\batchS_k}{\frac{c\eta}{1+\eta\tilde{\lambda}_k}\inner{\gradf{\xk}}{\left[\shessf{\batchS_k}{\xk}\right]^{-1}\sgradf{\batchG_k}{\xk}}} + \expectationwrtof{\batchG_k, \batchS_k}{\omegastarof{\omegaprimeof{\eta\tilde{\lambda}_k}}} .
\end{align}
To lower bound the middle term, observe that 
\begin{align*}
\tilde{\lambda}_k &\leq \frac{1}{\sqrt{\mutilde+\tau}}\norm{\sgradf{\batchG_k}{\xk}}\\
&=\frac{1}{\sqrt{\mutilde+\tau}}\norm{\sgradf{\batchG_k}{\xk}-\sgradf{\batchG}{\xopt}} \tag{by interpolation assumption}\\
&\leq  \frac{\Ltilde+\tau }{\sqrt{\mutilde+\tau}}\norm{\xk-\xopt} \tag{by smoothness on the batch}\\
&\leq \frac{(\Ltilde+\tau) D}{\sqrt{\mutilde+\tau}} \\
&\defeq \lambda_{\max},
\end{align*}
\begin{align*}
 \implies \expectationwrtof{\batchG_k,\batchS_k}{f(\xkk)}  &\leq f(\xk) - \frac{c\eta\lambda_k^2}{1+\eta\lambda_{\max}} + \expectationwrtof{\batchG_k, \batchS_k}{\omegastarof{\omegaprimeof{\eta\tilde{\lambda}_k}}} .
\end{align*}
Using $\omegastarof{\omegaprimeof{t}}=t\omegaprimeof{t}-\omegaof{t}$ for $t\geq0$, the last term can be bounded as
\begin{align}
 \expectationof{\omegastarof{\omegaprimeof{\eta\tilde{\lambda}_k}}} &= \expectationof{\eta\tilde{\lambda}_k\omegaprimeof{\eta\tilde{\lambda}_k}-\omegaof{\eta\tilde{\lambda}_k}} \nonumber \\
 &\leq \frac{\eta^2\expectationof{\tilde{\lambda}_k^2}}{1+\eta\tilde{\lambda}_{\min}} - \expectationof{\omegaof{\eta\tilde{\lambda}_k}} \nonumber\\
 \intertext{where $\tilde{\lambda}_{\min}=\min_{\xk,\batchG_k,\batchS_k} \tilde{\lambda}_k$. Applying the Newton decrement SGC gives us }
 \implies \expectationof{\omegastarof{\omegaprimeof{\eta\tilde{\lambda}_k}}} &\leq \frac{\eta^2\rhond\lambda_k^2}{1+\eta\tilde{\lambda}_{\min}}-\expectationof{\omegaof{\eta\tilde{\lambda}_k}} \nonumber
\end{align}
with $\rhond=\frac{\rho L}{\mutilde+\tau}$. Combining this with (\ref{eq:expwrtboth}) gives us
\begin{align}
\label{eq:expwrtboth2}
\expectationof{f(\xkk)} &\leq f(\xk) - \frac{c\eta\lambda_k^2}{1+\eta\lambda_{\max}} + \frac{\eta^2\rhond\lambda_k^2}{1+\eta\tilde{\lambda}_{\min}}-\expectationof{\omegaof{\eta\tilde{\lambda}_k}} \nonumber\\
 &= f(\xk) - \eta\lambda_k^2\underbrace{\left(\frac{c}{1+\eta\lambda_{\max}}-\frac{\eta\rhond}{1+\eta\tilde{\lambda}_{\min}}\right)}_{(*)}-\expectationof{\omegaof{\eta\tilde{\lambda}_k}} .
\end{align}
For $\eta$ in the range $0< \eta \leq\frac{c}{\rhond(1+\lambda_{\max})-c\lambda_{\min}} \quad (\leq 1)$, we have
\begin{align*}
\eta\rhond(1+\lambda_{\max}) - \eta c\lambda_{\min} &\leq c\\
\implies \eta\rhond(1+\eta\lambda_{\max}) \leq \eta\rhond(1+\lambda_{\max}) &\leq c(1+\eta\lambda_{\min})\\
\implies \frac{\eta\rhond}{1+\eta\lambda_{\min}} &\leq \frac{c}{1+\eta\lambda_{\max}}.
\end{align*}
The $\eta \leq 1$ claim comes from substituting in our choice of $c$ and definition of $\rhond$, which gives us
\begin{align*}
\eta &\leq     \frac{c}{\rhond(1+\lambda_{\max})-c\lambda_{\min}} \\
&=\frac{1}{\frac{\rho(1+\lambda_{\max})}{\left(\frac{\mutilde+\tau}{L}\right)^{3/2}}-\lambda_{\min}},
\end{align*}
which is less than $1$ since $\rho>1$ and $\lambda_{\max}>\lambda_{\min}$ for $\tau$ chosen small enough. Thus we can choose $0<\eta\leq\frac{c}{\rhond(1+\lambda_{\max})}\,(\leq \frac{c}{\rhond(1+\lambda_{\max})-c\lambda_{\min}})$ and upper bound (*) in (\ref{eq:expwrtboth2}) by $0$. Now we have the following expected decrease of the function value for one update,
\begin{align}
\label{eq:expdecrboth}
\expectationof{f(\xkk)} &\leq f(\xk) - \expectationof{\omegaof{\eta\tilde{\lambda}_k}} \nonumber\\
\intertext{by the convexity of $\omega$ and using Jensen's inequality,}
&\leq f(\xk) - \omegaof{\eta\expectationof{\tilde{\lambda}_k}} \nonumber\\
&= f(\xk) - \omegaof{\eta\E\norm{\shessf{\batchS_k}{\xk}^{-1/2}\sgradf{\batchG_k}{\xk}}} \nonumber\\
\intertext{apply Jensen's inequality using the convexity of $\norm{\cdot}_H$ for some $H\succeq0$ and that $\omega$ is monotonically increasing on the positive domain and the fact that $\Gk$ and $\Sk$ are independent batches,}
&\leq f(\xk) - \omegaof{\eta\norm{\E\,\shessf{\batchS_k}{\xk}^{-1/2}\E\,\sgradf{\batchG_k}{\xk}}}. \nonumber\\
\intertext{Using the fact that the inverse square root function is operator convex (L{\"o}wner-Heinz Theorem) \citep{tropp2015introduction,carlen2010trace} on a positive spectrum, we can apply the operator Jensen inequality to bound the inner term,}
\implies \mathbb{E}[f(\xkk)]&\leq f(\xk) - \omegaof{\eta\norm{\left[\hessf{\xk} + \tau I_d \right]^{-1/2}\gradf{\xk}}} \nonumber\\
&= f(\xk) - \omegaof{\eta\lambda_k}. 
\end{align}
 Note that for any $c,\delta\in(0,1]$ and $t\geq 0$,
\begin{align*}
\omegaof{ct}-c\delta\omegaof{t} &= ct-\log(1+ct) - c\delta t+c\delta\log(1+t)\\
&\geq ct-c\log(1+t) - c\delta t+c\delta\log(1+t)\\
&= (c-c\delta)t- (c-c\delta)\log(1+t)\\
&=  (c-c\delta)(t-\log(1+t))\\
&=  (c-c\delta)\omegaof{t}\\
&\geq 0\\
\implies \omegaof{ct} &\geq c\delta\omegaof{t}.
\end{align*}
Combining this with ($\ref{eq:expdecrboth}$) yields the global R-linear convergence rate, 
\begin{align*}
\expectationof{f(\xkk)} &\leq f(\xk) - \eta\delta\omegaof{\lambda_k} \\
\implies \expectationof{f(\x_T)}-f(\xopt) &\leq f(\x_0)-f(\xopt)-\eta\delta\left(\sum_{k=0}^{T-1}\omegaof{\lambda_k}\right).
\end{align*}
Note that $\lambda_k=\inner{\gradf{\xk}}{\left[\hessf{\xk}+\tau I\right]^{-1}\gradf{\xk}}^{1/2} \leq \lambda_k^0$, and since $\omegastarof{t}$ is a decreasing function for $t\leq1/6$, then $\omegastarof{\lambda_k}\geq\omegastarof{\lambda_k^0}$. As shown in \cite{zhang2015disco}, for all $t\leq 1/6$, $\omegastarof{t}\leq 1.26\,\omegaof{t}$, then for $\lambda_k\leq\lambda_k^0\leq 1/6$, we can bound the above as
\begin{align*}
\expectationof{f(\xkk)} &\leq f(\xk) - \frac{\eta\delta}{1.26}\omegastarof{\lambda_k^0}.\\
\intertext{Subtract $f(\xopt)$ from both sides,}
&\leq f(\xk) - f(\xopt) - \frac{\eta\delta}{1.26}\omegastarof{\lambda_k^0}\\
\intertext{and apply Theorem~\ref{thm:nesterov-self-concordant-strong-convexity},}
&\leq f(\xk) - f(\xopt) - \frac{\eta\delta}{1.26}(f(\xk)-f(\xopt))\\
\expectationof{f(\xkk)} - f(\xopt)&\leq \left(1-\frac{\eta\delta}{1.26}\right)(f(\xk)-f(\xopt))
\end{align*}
which completes the proof.
\end{proof}
\section{Proof of Theorem~\ref{thm:lbfgs}}
\label{sec:proof-lbfgs}
\begin{proof}
From the L-smoothness assumption, we have
\begin{align*}
\E_{\Gk}[f(\xkk)] &\leq f(\xk)-\etak\inner{\gradf{\xk}}{\Hk \, \E_{\Gk} [\sgradf{\batchG_k}{\xk}]} + \frac{L}{2}\etak^2 \, \E_{\Gk}\normsq{\Hk \, \sgradf{\batchG_k}{\xk}}\\
&\leq f(\xk)-\etak\inner{\gradf{\xk}}{\Hk\gradf{\xk}} + \frac{L \, \lambda_d^2 \, \etak^2}{2} \, \E_{\Gk}\normsq{\sgradf{\batchG_k}{\xk}}.\\
\intertext{Bounding the last term using Lemma~\ref{lemma:grad-batch-sgc},}
\E_{\Gk}\normsq{\sgradf{\batchG_k}{\xk}} & \leq \left(\frac{(n - b_g) \, (\rho - 1)}{(n-1) \, b_g} + 1 \right) \, \normsq{\gradf{\xk}} .\\
\intertext{Denoting $\left(\frac{(n - b_g) \, (\rho - 1)}{(n-1) \, b_g} + 1 \right)$ as $\rho^{\prime}$,  the expected decrease becomes}
\E_{\Gk}[f(\xkk)] & \leq  f(\xk)-\etak\lambda_1\normsq{\gradf{\xk}}+\frac{\rho^{\prime} L \, \lambda_d^2 \, \etak^2}{2}\normsq{\gradf{\xk}}.\\
\intertext{Let $\etak = \eta = \frac{\lambda_1}{\rho^{\prime} L\lambda_d^2}$,}
\implies \E_{\Gk}[f(\xkk)] & \leq f(\xk)-\frac{\lambda_1^2}{\rho^{\prime} L\lambda_d^2}\normsq{\gradf{\xk}} + \frac{\rho^{\prime} L\lambda_d^2}{2}\frac{\lambda_1^2}{\rho^{{\prime}^2} L^2\lambda_d^4}\normsq{\gradf{\xk}}\\
&= f(\xk)-\left(\frac{\lambda_1^2}{\rho^{\prime} L\lambda_d^2}-\frac{\lambda_1^2}{2\rho^{\prime} L\lambda_d^2}\right)\normsq{\gradf{\xk}}\\
& = f(\xk)-\frac{\lambda_1^2}{2\rho^{\prime} L\lambda_d^2}\normsq{\gradf{\xk}}.\\
\intertext{Subtracting $f(\xopt)$ from both sides and apply strong convexity,}
\E_{\Gk}[f(\xkk)]-f(\xopt) &\leq f(\xk)-f(\xopt)-\frac{\mu\lambda_1^2}{\rho^{\prime} L\lambda_d^2}(f(\xk)-f(\xopt))\\
&= \left(1-\frac{\mu\lambda_1^2}{\rho^{\prime} L\lambda_d^2}\right)(f(\xk)-f(\xopt)) \\
& \leq 
\left(1-\frac{\mu \, \lambda_1^2}{\left(\frac{(n - b_g) \, (\rho - 1)}{(n-1) \, b_g} + 1 \right) \, L \, \lambda_d^2}\right)(f(\xk)-f(\xopt)).
\intertext{After applying recursion gives us the desired result, }
\E[f(\x_{T})]-f(\xopt) & \leq 
\left(1-\frac{\mu \, \lambda_1^2}{\left(\frac{(n - b_g) \, (\rho - 1)}{(n-1) \, b_g} + 1 \right) \, L \, \lambda_d^2}\right)^{T} (f(\x_0)-f(\xopt)).
\end{align*}
\end{proof}
\section{Common Results}
\label{app:common-lemmas}
\begin{lemma}
Consider $y=\frac{1}{n}\sum_{i=1}^ny_i$ where $y_i\in\R^d$. Then for a $y_i$ selected uniformly at random, we have $\expectationof{y_i}=y$. Suppose we uniformly draw a sample $B\subset\{1,\dots,n\}$ and let $y_B=\frac{1}{b}\sum_{i\in B}$ where $b=|B|$. If the $y_i$'s satisfy a growth condition such that
\begin{equation*}
    \E_i\normsq{y_i}\leq c\normsq{y}
\end{equation*}
for some $c>0$. Then the expected squared norm of the error $\epsilon=y_B-y$ can be bounded as
\begin{equation*}
    \E\normsq{y_B-y} \leq \frac{(n-b)(c-1)}{(n-1)b}\normsq{y}.
\end{equation*}
\label{lemma:grad-batch-sgc}
\end{lemma}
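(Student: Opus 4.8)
The plan is to compute $\E\normsq{y_B - y}$ exactly as the variance of a sample mean under sampling without replacement, and then use the growth condition to convert the resulting population variance into a multiple of $\normsq{y}$. First I would introduce indicators $X_i = \mathbb{1}[i \in B]$ so that $y_B = \frac{1}{b}\sum_{i=1}^n X_i y_i$, and write the error as $y_B - y = \frac{1}{b}\sum_{i=1}^n (X_i - b/n)\,y_i$, which is valid because $\frac{1}{b}\sum_{i=1}^n (b/n)\,y_i = y$.

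Next I would record the relevant moments of the $X_i$ under simple random sampling without replacement of size $b$: namely $\E[X_i] = b/n$, $\Var{X_i} = \frac{b(n-b)}{n^2}$, and for $i \neq j$, $\Cov{X_i,X_j} = \frac{b(b-1)}{n(n-1)} - \frac{b^2}{n^2} = -\frac{b(n-b)}{n^2(n-1)}$. The negative covariance, and in particular the factor $\frac{1}{n-1}$ it carries, is the source of the finite-population correction, and pinning down this constant is the crux of the argument.

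Then I would expand $\E\normsq{y_B - y} = \frac{1}{b^2}\sum_{i,j}\E[(X_i - b/n)(X_j - b/n)]\inner{y_i}{y_j}$, separate the diagonal ($i=j$) terms from the off-diagonal ($i \neq j$) terms, and substitute the variance and covariance above. Using the identities $\sum_{i \neq j}\inner{y_i}{y_j} = n^2\normsq{y} - \sum_i \normsq{y_i}$ and $\sum_i \normsq{y_i} - n\normsq{y} = \sum_i \normsq{y_i - y}$, the algebra collapses to the exact formula $\E\normsq{y_B - y} = \frac{n-b}{(n-1)\,b}\cdot\frac{1}{n}\sum_{i=1}^n \normsq{y_i - y}$.

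Finally, I would invoke the growth condition through the bias--variance decomposition $\frac{1}{n}\sum_{i=1}^n \normsq{y_i - y} = \E_i\normsq{y_i} - \normsq{y} \leq (c-1)\normsq{y}$, and substitute to obtain $\E\normsq{y_B - y} \leq \frac{(n-b)(c-1)}{(n-1)\,b}\normsq{y}$, completing the proof. The main obstacle is the bookkeeping in the variance computation---especially verifying the covariance constant and the collapse of the cross terms into $\sum_i \normsq{y_i - y}$---whereas the concluding application of the hypothesis is essentially a one-line step.
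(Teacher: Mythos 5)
Your proposal is correct, and its skeleton matches the paper's: both arguments rest on the exact identity $\E\normsq{y_B-y} = \frac{n-b}{(n-1)\,b}\cdot\frac{1}{n}\sum_{i=1}^n\normsq{y_i-y}$ for the mean of a simple random sample drawn without replacement, followed by the same one-line conversion $\frac{1}{n}\sum_{i=1}^n\normsq{y_i-y} = \E_i\normsq{y_i}-\normsq{y}\leq (c-1)\normsq{y}$. The difference lies in how that identity is justified. The paper does not prove it: it invokes the scalar finite-population variance formula coordinate-by-coordinate, citing Lohr's sampling textbook, and then sums over the $d$ coordinates to assemble the vector statement. You instead derive it from first principles via inclusion indicators $X_i$, computing $\Var{X_i}=\frac{b(n-b)}{n^2}$ and $\Cov{X_i,X_j}=-\frac{b(n-b)}{n^2(n-1)}$ and collapsing the cross terms with $\sum_{i\neq j}\inner{y_i}{y_j}=n^2\normsq{y}-\sum_i\normsq{y_i}$; your covariance constant and the collapse are both correct, and they are indeed the crux, since the $\frac{1}{n-1}$ factor in the final bound comes precisely from that negative covariance. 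What your route buys is self-containedness---no appeal to survey-sampling results---at the cost of the indicator bookkeeping; what the paper's route buys is brevity, though its presentation is looser than yours (it writes $\epsilon_j^2$ for what is really its expectation, with the expectation only appearing explicitly near the end). Either way the lemma follows.
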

\begin{proof}
For an arbitrary entry $j\in\{1,\dots,d\}$, the error $\epsilon_j^2$ can be bounded using its sample variance \citep{lohr2019sampling} as follows,
\begin{align}
\label{eq:expected-size-of-error}
\epsilon_j^2 &= \frac{n-b}{nb}\frac{1}{n-1}\sum_{i=1}^n((y_{i})_j-y_j)^2 \nonumber\\
&= \frac{n-b}{nb}\frac{1}{n-1}\sum_{i=1}^n((y_{i})_j^2-2(y_{i})_jy_j+y_j^2). \nonumber
\intertext{Take the squared norm of $\epsilon$, we have}
\normsq{\epsilon}&=\frac{n-b}{nb}\frac{1}{n-1}\sum_{j=1}^d\sum_{i=1}^n((y_{i})_j^2-2(y_{i})_jy_j+y_j^2)  \nonumber\\
&=\frac{n-b}{nb}\frac{1}{n-1}\sum_{i=1}^n\left(\normsq{y_i}-2\inner{y_i}{y}+\normsq{y}\right). \nonumber
\intertext{Now take expectation on both sides and use the unbiasedness of $y_i$ gives us}
\E\normsq{\epsilon}&=\frac{n-b}{nb}\frac{1}{n-1}\sum_{i=1}^n\left(\E\normsq{y_i}-2\normsq{y}+\normsq{y}\right). 
\intertext{Apply the growth condition,}
\E\normsq{\epsilon} &\leq \frac{n-b}{nb}\frac{1}{n-1} \sum_{i=1}^n\left(c\normsq{y}-\normsq{y}\right) \nonumber\\
&=\frac{(n-b)(c-1)}{(n-1)b}\normsq{y}\nonumber
\end{align} 
which completes the proof.
\end{proof}

\begin{lemma}
Consider the same setup as in Lemma~\ref{lemma:grad-batch-sgc}. If we replace the growth condition with 
\begin{equation*}
    \E_i\normsq{y_i}\leq c\norm{y}^4,
\end{equation*}
then we obtain the following bound on the expected squared error
\begin{equation*}
    \E\normsq{y_B-y} \leq \frac{(n-b)c}{(n-1)b}\norm{y}^4
\end{equation*}
for some $c>0$.
\label{lemma:grad-batch-strong-sgc}
\end{lemma}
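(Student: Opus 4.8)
The plan is to reuse the computation in the proof of \cref{lemma:grad-batch-sgc} essentially verbatim, up to the point where the growth condition is invoked. The key observation is that the identity
\begin{equation*}
\E\normsq{y_B - y} = \frac{n-b}{nb}\frac{1}{n-1}\sum_{i=1}^n \left(\E\normsq{y_i} - \normsq{y}\right)
\end{equation*}
follows purely from the finite-population sample-variance expansion together with the unbiasedness $\E[y_i] = y$, and is independent of which growth condition we assume. So first I would expand $\normsq{\epsilon}$ for $\epsilon = y_B - y$ via the sample-variance formula of \citet{lohr2019sampling}, sum over coordinates to recover the squared norm, take the expectation, and use $\E[y_i] = y$ to cancel the cross term against one copy of $\normsq{y}$, arriving at the identity above.

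Second, I would apply the modified quartic growth condition $\E_i\normsq{y_i} \leq c\norm{y}^4$ to each summand, giving
\begin{equation*}
\E\normsq{y_B - y} \leq \frac{n-b}{nb}\frac{1}{n-1}\sum_{i=1}^n \left(c\norm{y}^4 - \normsq{y}\right).
\end{equation*}
Finally, since $-\normsq{y} \leq 0$, I would discard that term to upper bound the sum by $n\,c\norm{y}^4$; the prefactor $\frac{n-b}{nb}\frac{1}{n-1}\cdot n = \frac{n-b}{(n-1)b}$ then yields the claimed bound $\frac{(n-b)c}{(n-1)b}\norm{y}^4$.

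There is no genuine obstacle here, since this is a routine modification of \cref{lemma:grad-batch-sgc}. The only subtlety — and the reason the constant is $c$ rather than $c-1$ as in \cref{lemma:grad-batch-sgc} — is that the quartic growth condition no longer lets the $-\normsq{y}$ term combine cleanly with the leading term (the two terms now scale with different powers of $\norm{y}$), so it must be dropped rather than absorbed. This looseness is harmless because the lemma is only ever used as an upper bound.
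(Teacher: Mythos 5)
Your proposal is correct and follows essentially the same route as the paper's own proof: reuse the sample-variance identity from Lemma~\ref{lemma:grad-batch-sgc}, apply the quartic growth condition, and discard the nonpositive $-\normsq{y}$ term (which is exactly why the constant is $c$ rather than $c-1$). No gaps.
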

\begin{proof}
Using the same analysis as in Lemma~\ref{lemma:grad-batch-sgc} up to equation (\ref{eq:expected-size-of-error}) and applying the new growth condition gives us
\begin{align*}
\E\normsq{\epsilon} &\leq \frac{n-b}{nb}\frac{1}{n-1}\sum_{i=1}^n\left(c\norm{y}^4-\normsq{y}\right)\\
&\leq\frac{n-b}{nb}\frac{1}{n-1}\sum_{i=1}^n\left(c\norm{y}^4\right) \tag{Since $\normsq{y}>0$}\\
&=\frac{(n-b)c}{(n-1)b}\norm{y}^4.
\end{align*}
\end{proof}

\begin{proposition}
\label{prop:newton-decr-sgc}
Suppose function $f$ satisfies the SGC with parameter $\rho$. For any positive definite matrices $A, B$ with bounded eigenvalues $\lambda_{\min}(B)$ and $\lambda_{\max}(A)$, the following inequality holds:
\begin{equation*}
\E\normsq{\sgradf{\batchG}{\x}}_A \leq \frac{\rho\lambda_{\max}(A)}{\lambda_{\min}(B)}\normsq{\gradf{\x}}_B.
\end{equation*}.
\end{proposition}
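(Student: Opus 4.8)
The plan is to sandwich the weighted $A$-norm of the stochastic gradient between two plain $\ell_2$-norm quantities, using the extreme eigenvalues of $A$ and $B$, with the SGC serving as the bridge in the middle. Writing the weighted norm as $\normsq{v}_A = \inner{v}{Av}$, the first step is the elementary operator bound $\inner{v}{Av} \leq \lambda_{\max}(A)\normsq{v}$, valid for any positive definite $A$ and any $v$. Applying this with $v = \sgradf{\batchG}{\x}$ and taking expectations gives $\E\normsq{\sgradf{\batchG}{\x}}_A \leq \lambda_{\max}(A)\,\E\normsq{\sgradf{\batchG}{\x}}$, converting the weighted norm into an unweighted one at the cost of the factor $\lambda_{\max}(A)$.

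Second, I would control the expected squared $\ell_2$-norm of the stochastic gradient by the SGC. For a single uniformly sampled index, the SGC directly gives $\E\normsq{\sgradf{i}{\x}} \leq \rho\normsq{\gradf{\x}}$; for a mini-batch $\batchG$ of size $b$ the same inequality holds with a constant no larger than $\rho$, since Lemma~\ref{lemma:grad-batch-sgc} with $c=\rho$ yields $\E\normsq{\sgradf{\batchG}{\x}-\gradf{\x}} \leq \tfrac{(n-b)(\rho-1)}{(n-1)b}\normsq{\gradf{\x}}$, and the variance decomposition $\E\normsq{X}=\E\normsq{X-\E X}+\normsq{\E X}$ together with unbiasedness gives $\E\normsq{\sgradf{\batchG}{\x}} \leq \bigl(\tfrac{(n-b)(\rho-1)}{(n-1)b}+1\bigr)\normsq{\gradf{\x}} \leq \rho\normsq{\gradf{\x}}$ (using $\tfrac{n-b}{(n-1)b}\leq 1$ for $b\geq 1$). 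Either way, we reach $\E\normsq{\sgradf{\batchG}{\x}}_A \leq \rho\,\lambda_{\max}(A)\normsq{\gradf{\x}}$.

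Third, I would convert the unweighted norm of the full gradient back into its $B$-weighted form, this time using the smallest eigenvalue: from $\inner{\gradf{\x}}{B\gradf{\x}} \geq \lambda_{\min}(B)\normsq{\gradf{\x}}$ we obtain $\normsq{\gradf{\x}} \leq \normsq{\gradf{\x}}_B/\lambda_{\min}(B)$. Chaining the three bounds yields $\E\normsq{\sgradf{\batchG}{\x}}_A \leq \frac{\rho\,\lambda_{\max}(A)}{\lambda_{\min}(B)}\normsq{\gradf{\x}}_B$, as claimed.

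There is no genuine obstacle here; the entire argument is just two eigenvalue sandwiches around the SGC, and the only thing requiring care is the direction of each bound: the $A$-norm is \emph{upper}-bounded (hence $\lambda_{\max}(A)$) while the $B$-norm is \emph{lower}-bounded (hence $\lambda_{\min}(B)$), with the SGC linking the stochastic and full gradients in the plain $\ell_2$ geometry. The intended application takes $A=[\shessf{j}{\x}]^{-1}$ (so $\lambda_{\max}(A)=1/(\mutilde+\tau)$ by the bounded-eigenvalue assumption) and $B=[\hessf{\x}+\tau I_d]^{-1}$ (so $\lambda_{\min}(B)$ is controlled by $L$-smoothness), which specializes the inequality to the Newton-decrement SGC $\E_i[\tilde{\lambda}_{i,j}^2(\x)]\leq \rhond\,\lambda^2(\x)$ used in Theorem~\ref{thm:full-stochastic-self-concord}.
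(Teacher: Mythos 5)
Your proof is correct and follows essentially the same route as the paper's: upper-bound the $A$-weighted norm by $\lambda_{\max}(A)$ times the plain squared norm, apply the SGC, and lower-bound the $B$-weighted norm by $\lambda_{\min}(B)$. Your extra justification that the mini-batch gradient satisfies the SGC with constant at most $\rho$ (via Lemma~\ref{lemma:grad-batch-sgc} and the variance decomposition) is a welcome touch of rigor that the paper leaves implicit, but it does not change the substance of the argument.
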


\begin{proof}
From the LHS, we have
\begin{align*}
\E\normsq{\sgradf{\batchG}{\x}}_A &\leq \lambda_{\max}(A)\E\normsq{\sgradf{\batchG}{\x}} \\
&\leq \rho\lambda_{\max}(A)\normsq{\gradf{\x}}.
\end{align*}
From the RHS, we have
\begin{align*}
\normsq{\gradf{\x}}_B &\geq \lambda_{\min}(B)\normsq{\gradf{\x}}.
\end{align*}
Combining the two inequalities gives us the desired result.
\end{proof}

\newpage
\section{Additional Experiments}
\label{app:additional-exps}

\begin{figure*}[h]
\centering
\includegraphics[scale=0.4]{./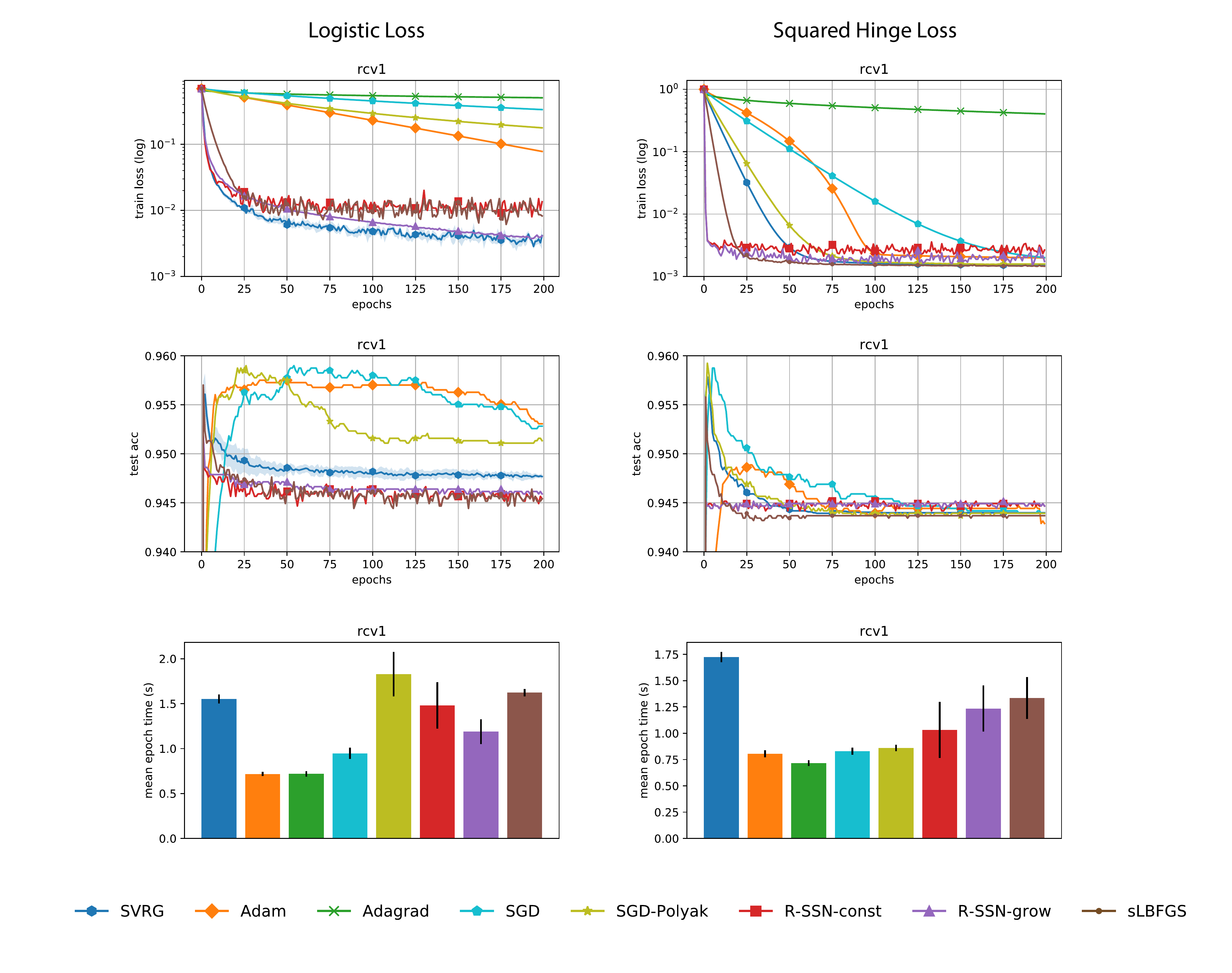}
\caption{Comparison of R-SSN variants and stochastic L-BFGS against first order methods on \texttt{rcv1} for linear models under two different losses. }
\label{fig:exp-kernel-supp}
\end{figure*}

\section{Notes on Convergence Rates}
\label{app:convergence-rates}
Suppose a sequence $\{x_k\}_{k\geq 0}$ converges to $x^*$, for $q=1$, define the limit of the ratio of successive errors as 
\begin{equation*}
    p\defeq \lim_{k\rightarrow\infty}\frac{\norm{x_{k+1}-x^*}}{\norm{x_k-x^*}^q}.
\end{equation*}
The rate is referred to as Q-linear when $p\in(0,1)$ and Q-superlinear when $p=0$, where Q stands for quotient. If $q=2$ and $p<\infty$, it is referred to as Q-quadratic convergence. We say the rate is linear-quadratic if
\begin{equation*}
    \norm{x_{k+1}-x^*} \leq p_1\norm{x_k-x^*} + p_2\normsq{x_k-x^*}
\end{equation*}
for $p_1\in(0,1)$ and $p_2<\infty$. Moreover, the R-linear convergence is a weaker notion where R stands for root, and is characterized as the following: there exists a sequence $\{\epsilon_k\}$ such that for all $k\geq0$,
\begin{equation*}
    \norm{x_k-x^*}\leq \epsilon_k
\end{equation*}
where $\{\epsilon_k\}$ converges Q-linearly to $0$. Note that this is a less steady rate as it does not enforce a decrease at every step~\citep{nocedal2006numerical}.
\end{document}